\theoremstyle{plain}
\newtheorem{theorem}{Theorem}[section]
\theoremstyle{definition}
\theoremstyle{remark}
\DeclareMathOperator*{\argmax}{arg\,max}
\icmltitlerunning{Optimizing DDPM Sampling with Shortcut Fine-Tuning}
\begin{document}

\twocolumn[
% \icmltitle{Policy gradient fine-tuning of diffusion models with integral probability metrics}
\icmltitle{Optimizing DDPM Sampling with Shortcut Fine-Tuning}

% It is OKAY to include author information, even for blind
% submissions: the style file will automatically remove it for you
% unless you've provided the [accepted] option to the icml2023
% package.

% List of affiliations: The first argument should be a (short)
% identifier you will use later to specify author affiliations
% Academic affiliations should list Department, University, City, Region, Country
% Industry affiliations should list Company, City, Region, Country

% You can specify symbols, otherwise they are numbered in order.
% Ideally, you should not use this facility. Affiliations will be numbered
% in order of appearance and this is the preferred way.
\icmlsetsymbol{equal}{*}

\begin{icmlauthorlist}
\icmlauthor{Ying Fan}{yyy}
\icmlauthor{Kangwook Lee}{yyy}
\end{icmlauthorlist}

\icmlaffiliation{yyy}{UW Madison}
% \icmlaffiliation{comp}{Company Name, Location, Country}
% \icmlaffiliation{sch}{School of ZZZ, Institute of WWW, Location, Country}

\icmlcorrespondingauthor{Ying Fan, Kangwook Lee}{yfan87@wisc.edu, kangwook.lee@wisc.edu}
% \icmlcorrespondingauthor{Firstname2 Lastname2}{first2.last2@www.uk}

% You may provide any keywords that you
% find helpful for describing your paper; these are used to populate
% the "keywords" metadata in the PDF but will not be shown in the document
\icmlkeywords{Machine Learning, ICML}

\vskip 0.3in
]

% \textcolor{black}{red texts with question marks are uncertain parts. Plz feel free to add any comments.}
% this must go after the closing bracket ] following \twocolumn[ ...

% This command actually creates the footnote in the first column
% listing the affiliations and the copyright notice.
% The command takes one argument, which is text to display at the start of the footnote.
% The \icmlEqualContribution command is a standard text for equal contribution.
% Remove it (just {}) if you do not need this facility.

%\printAffiliationsAndNotice{}  % leave blank if no need to mention equal contribution
\printAffiliationsAndNotice{} % otherwise use the standard text.

\begin{abstract}
In this study, we propose \emph{Shortcut Fine-Tuning (SFT)}, a new approach for addressing the challenge of fast sampling of pretrained Denoising Diffusion Probabilistic Models (DDPMs). SFT advocates for the fine-tuning of DDPM samplers through the direct minimization of Integral Probability Metrics (IPM), instead of learning the backward diffusion process. This enables samplers to discover an alternative and more efficient sampling shortcut, deviating from the backward diffusion process. \textcolor{black}{Inspired by a control perspective,} we propose a new algorithm 
\textbf{SFT-
PG}: \textbf{S}hortcut \textbf{F}ine-\textbf{T}uning with \textbf{P}olicy \textbf{G}radient, and prove that under certain assumptions, gradient descent of diffusion models with respect to IPM is equivalent to performing policy gradient. \textcolor{black}{To our best knowledge, this is the first attempt to utilize reinforcement learning (RL) methods to train diffusion models.} Through empirical evaluation, we demonstrate that our fine-tuning method can further enhance existing fast DDPM samplers, resulting in sample quality comparable to or even surpassing that of the full-step model across various datasets.

% Denoising Diffusion Probabilistic Models (DDPMs) have demonstrated remarkable success in various generative tasks, as they are trained to imitate the reverse diffusion process.
% However, the effectiveness of a trained DDPM sampler is contingent upon the number of diffusion steps being sufficiently large, leading to extensive research on the development of fast sampling methods for pretrained DDPMs.
% In this work, we propose a new approach to addressing the challenge of fast sampling of pretrained DDPMs by advocating for the fine-tuning of DDPM samplers through the direct minimization of Integral Probability Metrics (IPM). 
% By directly minimizing IPM instead of learning the inverse diffusion process, our methodology enables samplers to discover an alternative and more efficient sampling path, deviating from the backward diffusion process.
% With this objective in mind, we propose a new algorithm that is similar to the policy gradient method for fine-tuning DDPMs by proving that under certain assumptions, the gradient descent of diffusion models is \emph{equivalent} to the policy gradient approach.
% Additionally, we propose a surrogate function of the Integral Probability Metrics (IPM) that provides insight into the conditions for monotonically improving the generator's performance. 
% Through empirical evaluation, we demonstrate that our fine-tuning method enhances the performance of existing fast samplers of DDPM, resulting in sample quality that is comparable to or even surpasses that of the full-step model across a diverse range of datasets.
\end{abstract}

\section{Introduction}
% placeholder
% (Only outlines here, not finished)
% Diffusion models 
% Diffusion 
% 1. Intro: diffusion models. Here we consider DDPM where each step is Markovian and conditional Gaussian. (constrained scope to DDPM, not DDIM or other noises). , so DDPM generally works well when the number of diffusion steps is large enough ($T\approx 1000$).

Denoising diffusion probabilistic models (DDPMs)~\citep{ho2020denoising} are parameterized stochastic Markov chains with Gaussian noises, which are learned by gradually adding noises to the data as the forward process, computing the posterior as a backward process, and then training the DDPM to match the backward process. Advances in DDPM~\citep{nichol2021improved,dhariwal2021diffusion} have shown the potential to rival GANs~\citep{gan} in generative tasks. However, one major drawback of DDPM is that a large number of steps $T$ is needed. As a result, there is a line of work focusing on sampling fewer $T'\ll T$ steps to obtain comparable sample quality: Most works are dedicated to better approximating the backward process as stochastic differential equations (SDEs) with fewer steps, generally via better noise estimation or computing better sub-sampling schedules~\citep{kong2021fast, san2021noise, lam2021bilateral, watson2021learningto, jolicoeur2021gotta, bao2021analytic, bao2022estimating}. Other works aim at approximating the backward process with fewer steps via more complicated non-gaussian noise distributions~\citep{xiao2021tackling}.\footnote{There is another line of work focusing on fast sampling of DDIM~\citep{song2020denoising} with deterministic Markov sampling chains, which we will discuss in Section~\ref{sec: related}.}

\begin{figure}[t!]
\centering
\includegraphics[width=0.9\columnwidth]{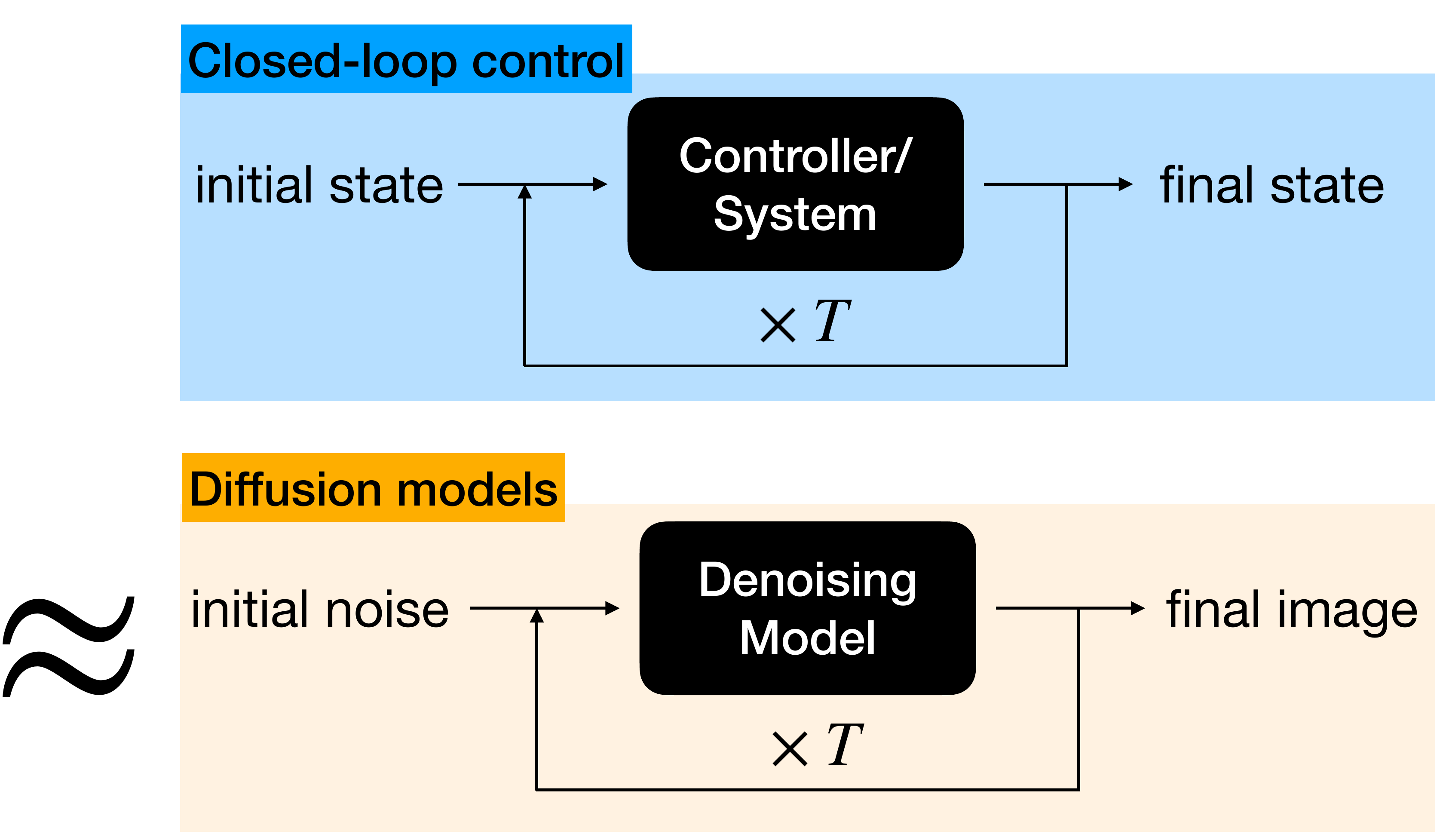}
\caption{\textcolor{black}{Image denoising is similar to a closed-loop control system: finding paths from pure noise to natural images.}}
\label{fig:control}
\end{figure}

To our best knowledge,  existing fast samplers of DDPM stick to imitating the computed backward process with fewer steps. If we treat data generation as a \textcolor{black}{control task (see Fig.~\ref{fig:control}), the backward process can be viewed as a demonstration to generate data from noise (which might not be optimal in terms of number of steps), and the training dataset could be an environment that provides feedback on how good the generated distribution is. From this view, imitating the backward process could be viewed as imitation learning~\citep{hussein2017imitation} or behavior cloning~\citep{torabi2018behavioral}. Naturally, one may wonder if we can do better than pure imitation, since learning via imitation is generally useful but rarely optimal, and we can explore alternative paths for optimal solutions during online optimization.
% and it generally takes extra exploration during online interactions with the ``environment'' to find an optimal one~\citep{vecerik2017leveraging}.
}

\begin{figure}[t!]
\centering
\includegraphics[width=0.9\columnwidth]{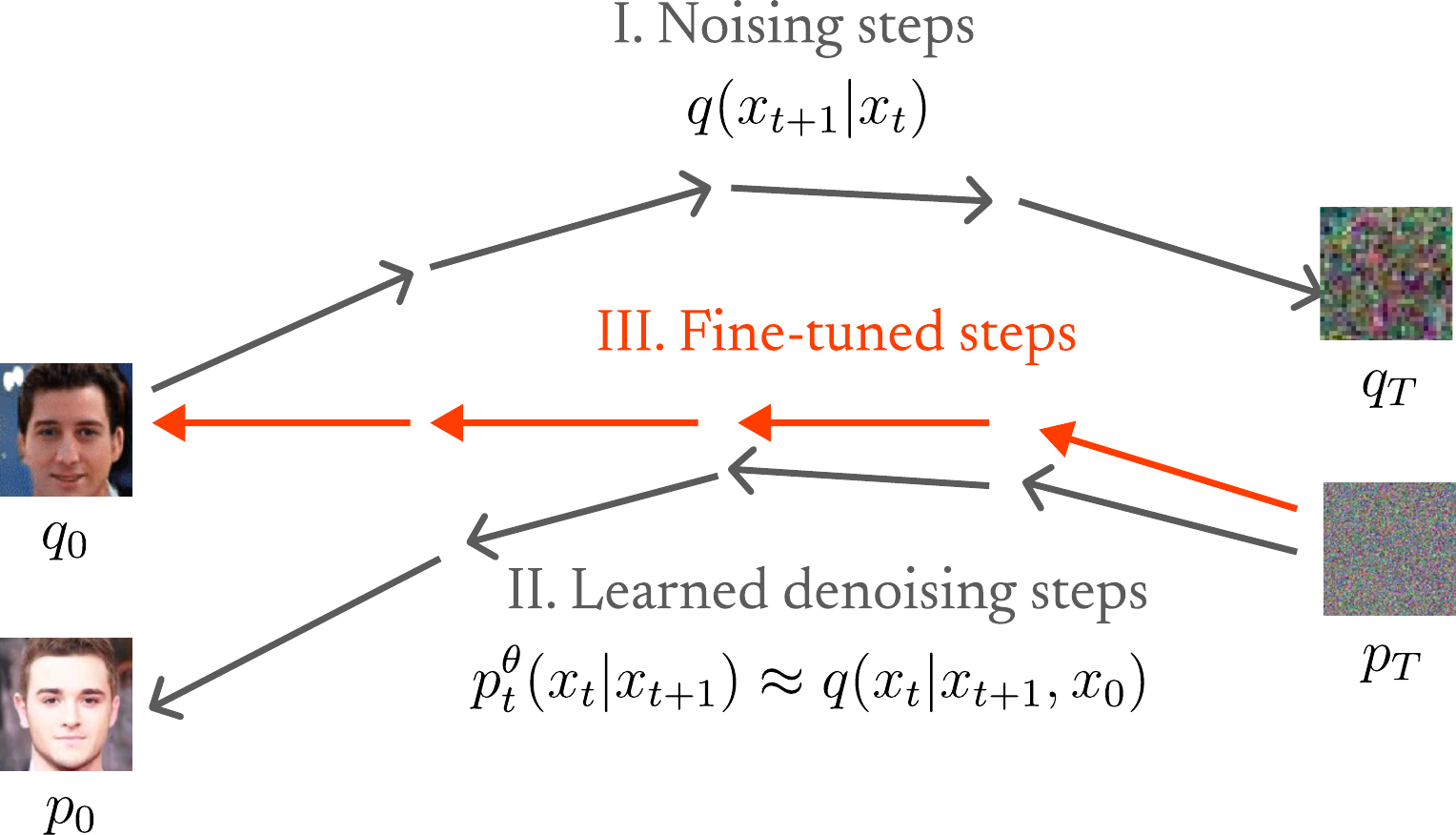}
\caption{
A visual illustration of the key idea of Shortcut Fine-Tuning (SFT).
DDPMs aim at learning the backward diffusion model, but this approach is limited to a small number of steps.
We propose the idea of \emph{not} following the backward process and exploring other unexplored paths that can lead to improved data generation.
To this end, we directly minimize an IPM and develop a policy gradient-like optimization algorithm. 
Our experimental results show that one can significantly improve data generation quality by fine-tuning a pretrained DDPM model with SFT. We also provide a visualization of the difference between steps in II and III when $T$ is small in Appendix~\ref{app: vis}.
% When $T$ is small, $p_T$ and $q_T$ are not close enough, thus imitating the backward process does not make sense. Each step is Gaussian.
}
\label{fig:smallt}
\end{figure}

Motivated by the above observation, we study the following underexplored question: 
\vspace{-0.1in}
\begin{center}
\emph{Can we improve DDPM sampling by \textbf{not} following the backward process?}
\end{center}

% What if we change the objective and only use imitation as initialization? It's a good initialization, but can be further improved! Relation with imitation learning and exploration in Reinforcement Learning (RL).

% Initialization makes sense with enough diversity and coverage, but not high quality. 

In this work, we show that this is indeed possible.
We fine-tune pretrained DDPM samplers by directly minimizing an integral probability metric (IPM) and show that finetuned DDPM samplers have significantly better generation qualities when the number of sampling steps is small. 
In this way, we can still enjoy diffusion models' multistep capabilities with no need to change the noise distribution, and improve the performance with fewer sampling steps.

More concretely, we first show that performing gradient descent of the DDPM sampler w.r.t. the IPM is equivalent to stochastic policy gradient, which echoes the aforementioned RL view but with a changing reward from the optimal critic function given by IPM. In addition, we present a surrogate function that can provide insights for monotonic improvements. Finally, we present a fine-tuning algorithm with alternative updates between the critic and the generator.

We summarize our main contributions as follows: 

\begin{itemize}
    \item (Section~\ref{sec: pg}) We propose a novel algorithm to fine-tune DDPM samplers with direct IPM minimization, and we show that performing gradient descent of diffusion models w.r.t. IPM is equivalent to policy gradient. To our best knowledge, this is the first work to apply reinforcement learning methods to diffusion models.

    \item (Section~\ref{sec: mono}) We present a surrogate function of IPM in theory, which provides insights on conditions for monotonic improvement and algorithm design.
    
    \item (Section~\ref{sec: regularization}) We propose a regularization for the critic based on the baseline function, which shows benefits for the policy gradient training.

    % \item We explored the different properties of pathwise derivative estimation and policy gradient estimation, and propose a regularization method of the critic's value that stabilizes the policy gradient training.

    \item (Section~\ref{sec: full exp}) Empirically, we show that our fine-tuning can improve DDPM sampling performance in two cases: when $T$ itself is small, and when $T$ is large but using a fast sampler where $T' \ll T$. In both cases, our fine-tuning achieves comparable or even higher sample quality than the DDPM with 1000 steps using 10 sampling steps.

\end{itemize}

\section{Background}

\subsection{Denoising Diffusion Probabilistic Models (DDPM)}
% \subsubsection{An RL View}
Here we consider denoising probabilistic diffusion models (DDPM)  as stochastic Markov chains with Gaussian noises ~\citep{ho2020denoising}. 
Consider data distribution $x_0 \sim q_0, x_0 \in \mathbb{R}^{n}$. 

Define the forward noising process: for $t \in [0,..,T-1]$,
\begin{equation}
   q(x_{t+1}|x_{t}) := \mathcal{N}(\sqrt{1-\beta_{t+1}}x_{t}, \beta_{t+1}I), 
\end{equation}
where $x_1,.., x_T$ are variables of the same dimensionality as $x_0$, $\beta_{1:T}$ is the variance schedule. 

We can compute the posterior as  a backward process:
\begin{equation}
    q(x_{t}|x_{t+1}, x_0) = \mathcal{N}(\tilde{\mu}_{t+1}(x_{t+1}, x_0), \tilde{\beta}_{t+1}I),
\end{equation}
where $\tilde{\mu}_{t+1}(x_{t+1}, x_0) = \frac{\sqrt{\bar{\alpha}_t}\beta_t}{1-\bar{\alpha}_{t+1}}x_0 + \frac{\sqrt{\alpha_{t+1}}(1-\bar{\alpha}_{t})}{1-\bar{\alpha}_{t+1}}x_{t+1}$, $\alpha_{t+1} = 1-\beta_{t+1}$, $ \bar{\alpha}_{t+1} = \prod_{s=1}^{t+1}\alpha_{s}$.

We define a DDPM sampler parameterized by $\theta$, which generates data starting from some pure noise $x_T \sim p_T$:
\begin{equation}
\begin{split}
&x_T \sim p_T = \mathcal{N}(0, I),\\
&x_t \sim p_t^{\theta}(x_t|x_{t+1}),\\
&p_t^{\theta}(x_t|x_{t+1}):=\mathcal{N}\big(\mu_{t+1}^\theta(x_{t+1}), \Sigma_{t+1}\big),\\
\end{split}
\end{equation}
where $\Sigma_{t+1}$ is generally chosen as $\beta_{t+1}I$ or $\tilde{\beta}_{t+1}I$. \footnote{In this work we consider a DDPM sampler with a fixed variance schedule $\beta_{1:T}$ as in \citet{ho2020denoising}, but it could also be learned as in \citet{nichol2021improved}.}

Define
\begin{equation}
   p^{\theta}_{x_{0:T}} := p_T(x_T)\prod_{t=0}^{T-1}p_t^{\theta}(x_t|x_{t+1}),
\end{equation}
and we have the marginal distribution $p^{\theta}_0(x_0) = \int p_{x_{0:T}}^{\theta}(x_{0:T}) d x_{1:T}$.

The sampler is trained by minimizing the sum of KL divergences for each step: 
\begin{equation}
J = \mathbb{E}_{q}\left[\sum_{t=0}^{T-1}D_{KL}(q(x_{t}|x_{t+1}, x_0), p_t^{\theta}(x_{t}|x_{t+1}))\right].
\end{equation}
Optimizing the above loss can be viewed as matching the conditional generator $p_t^{\theta}(x_t|x_{t+1})$ with the backward process $q(x_{t}|x_{t+1}, x_0)$ for each step. \citet{song2020score} show that $J$ is equivalent to score-matching loss when formulating the forward and backward process as a discrete version of stochastic differential equations. 

\subsection{Integral Probability Metrics (IPM)}

Given $\mathcal{A}$ as a set of parameters s.t. for each $\alpha \in \mathcal{A}$, it defines a critic $f_{\alpha}: \mathbb{R}^n \rightarrow \mathbb{R}$.
Given a critic $f_\alpha$ and two distributions $p_{0}^{\theta}$ and $q_0$, we define
\begin{equation}
    g(p_0^\theta, f_\alpha, q_0) := \mathop{\mathbb{E}}_{x_0\sim p_{0}^{\theta}}[f_\alpha(x_0)]-\mathop{\mathbb{E}}_{x_0 \sim q_0}[f_\alpha(x_0)].
\end{equation}
Let
\begin{equation}
\Phi(p_{0}^{\theta},q_0) := \sup_{\alpha \in \mathcal{A}} g(p_0^\theta, f_\alpha, q_0).
\end{equation}

If $\mathcal{A}$ satisfies that  $\forall \alpha \in \mathcal{A}$, $\exists \alpha'  \in \mathcal{A}$, s.t. $f_{\alpha'} = -f_{\alpha}$, then $\Phi(p_{\theta},q)$ is a pseudo metric over the probability space of $\mathbb{R}^n$, making it so-called integral probability metrics (IPM). 

In this paper, we consider $\mathcal{A}$ that makes $\Phi(p_{0}^{\theta},q_0)$ an IPM. For example, when $\mathcal{A} = \{\alpha: ||f_{\alpha}||_{L}\leq 1\}$, $\Phi(p_{0}^{\theta},q_0)$ is the Wasserstein-1 distance; when $\mathcal{A} = \{\alpha: ||f_{\alpha}||_{\infty}\leq 1\}$, $\Phi(p_{0}^{\theta},q_0)$ is the total variation distance; it also includes maximum mean discrepancy (MMD) when $\mathcal{A}$ defines all functions in  Reproducing Kernel Hilbert Space (RKHS).

% \subsection{Policy Gradient in Reinforcement Learning}

% \begin{figure}[htp]
% \centering

%     \sbox{0}{\includegraphics[width=0.3\textwidth]{compare_WGAN (1).pdf}}% The big figure

% \begin{tabular}{@{}c@{\hspace{0.01\textwidth}}c@{}}
% \usebox{0} &
%   \parbox[b][\ht0][s]{0.15\textwidth}{
%     \includegraphics[width=0.15\textwidth]{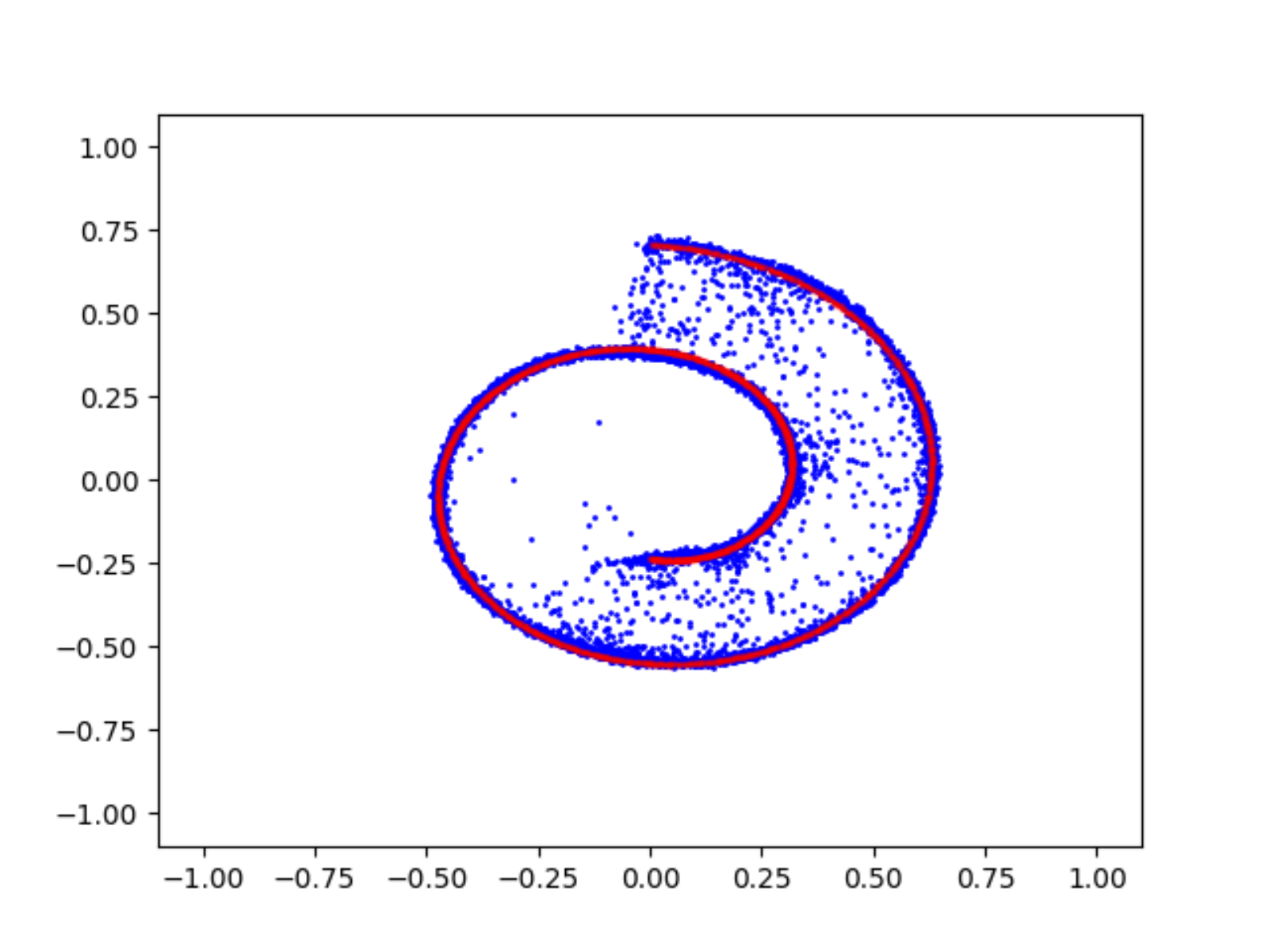}
%     \vfill
%     \includegraphics[width=0.15\textwidth]{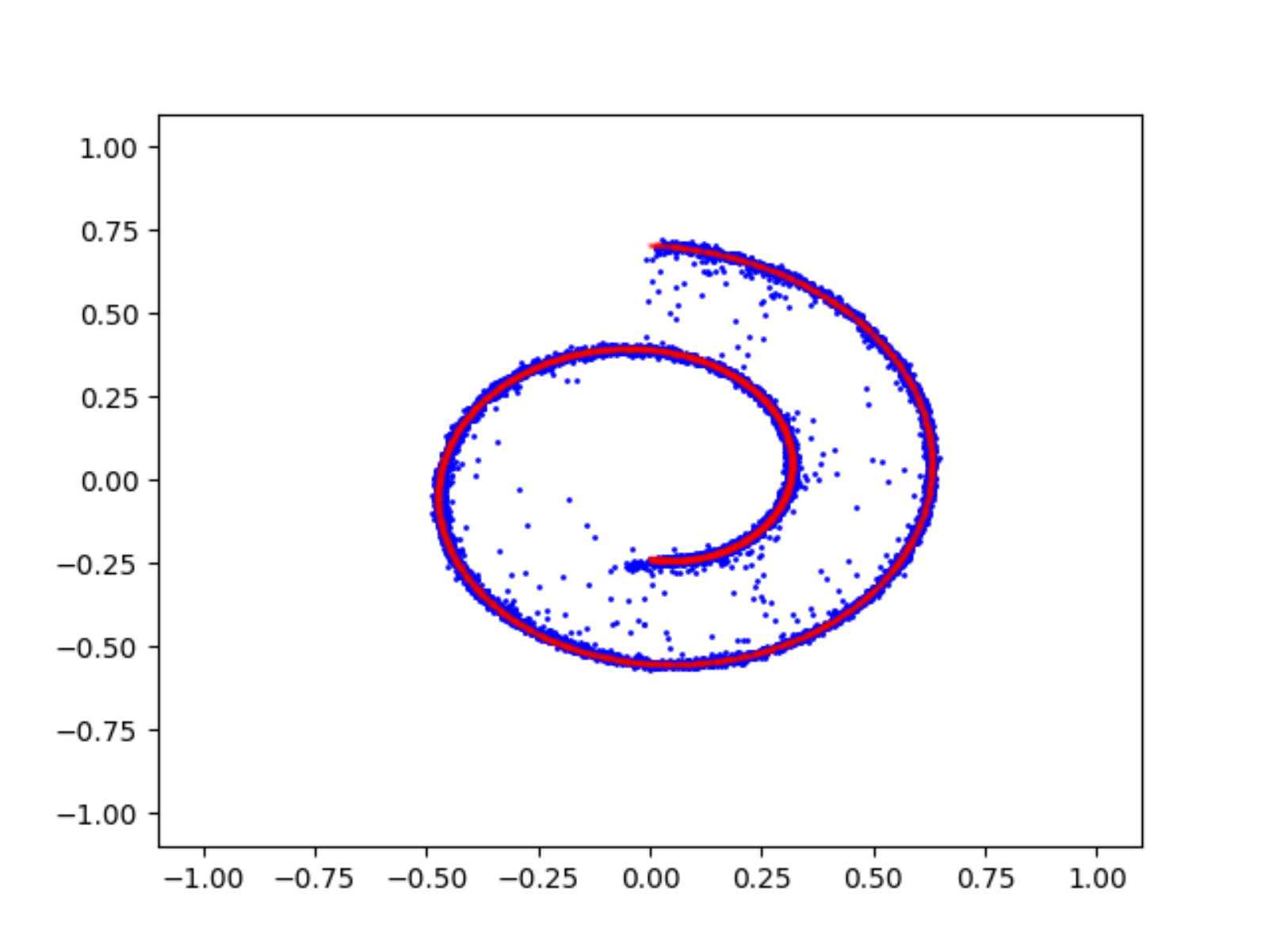}
%   }
% \end{tabular}
% \caption{Lorem ipsum}
% \end{figure}

% \subsection{Policy Gradient}

% In this section, we present the motivation for fine-tuning, the equivalence of policy gradient, and algorithm design.

\section{Motivation}

\subsection{Issues with Existing DDPM Samplers}
\label{sec: motivation}
Here we review the existing issues with DDPM samplers 1) when $T$ is not large enough, and 2) when sub-sampling with the number of steps $T' \ll T$, which inspires us to design our fine-tuning algorithm.
% \textcolor{black}{There are two cases: 1, T is not large enough and $p_T$ and $q_T$ are not close enough, as in Fig1(a); 2, T is large enough but fast samplers failed to recover in few steps due to not being able to distill fewer steps without loss in fidelity, which is the nature of Gaussian samplers (as in Fig1(b), two steps of Gaussian would be one step of non-gaussian.)~\citep{salimans2021progressive}.

\paragraph{Case 1. Issues caused by training DDPM with a small $T$ (Fig~\ref{fig:smallt}).} Given a score-matching loss $J$, the upper bound on Wasserstein-2 distance is given by ~\citet{kwon2022scorebased}:
\begin{equation}
W_2(p_0^\theta, q_0) \leq \mathcal{O}(\sqrt{J})+I(T)W_2(p_T, q_T),
\label{eq: upper bound}
\end{equation}
where $I(T)$ is non-exploding and $W_2(p_T, q_T)$ decays exponentially with $T$ when $T \rightarrow \infty$. From the inequality above, one sufficient condition for the score-matching loss $J$ to be viewed as optimizing the Wasserstein distance is when $T$ is large enough such that $I(T)W_2(p_T, q_T) \rightarrow 0$. 
% For KL divergence, ~\citet{song2021maximum} also shows that $D_{KL}(p_0^\theta,q_0) \leq J + D_{KL}(p_T, q_T)$?
% The result above motivates a possible explanation for why diffusion models generally need to be trained with a large number of timesteps: 
%  
Now we consider the case when $T$ is small and $p_T \not\approx q_T$.\footnote{Recall that during the diffusion process, we need small Gaussian noise for each step set the sampling chain to also be conditional Gaussian~\citep{ho2020denoising}. As a result, a small $T$ means $q_T$ is not close to pure Gaussian, and thus $p_T \not\approx q_T$.}. The upper bound in Eq.~(\ref{eq: upper bound}) can be high since $W_2(p_T,q_T)$ is not neglectable. As shown in Fig~\ref{fig:smallt}, pure imitation $p_t^{\theta}(x_t|x_{t+1}) \approx q(x_t|x_{t+1},x_0)$ would not lead the model exactly to $q_0$ when $p_T$ and $q_T$ are not close enough.
% \textcolor{black}{Some possible questions readers might ask:}
% \textcolor{black}{1) What if noise is large enough that $p_T = q_T$ with a small T? }
% \textcolor{black}{2) Similar explanation can also apply to cold diffusion?}
% \textcolor{black}{3) Why minimize W instead of KL? Mention the advantage of WGAN vs GAN?}
% In this case, one may consider fine-tuning the diffusion model to directly minimize some distance (for example, IPM) between $p_0$ and $q_0$ given a pretrained model.
% (+KL result ~\citep{song2021maximum})
% \vspace{-0.3cm}
\paragraph{Case 2. Issues caused by a smaller number of sub-sampling steps ($T' \ll T$) (Fig~\ref{fig:fastsampling} in Appendix~\ref{app: subsampling}).} We consider DDPM sub-sampling and other fast sampling techniques, where $T$ is large enough s.t. $p_T \approx q_T$, but we try to sample with fewer sampling steps ($T'$). It is generally done by choosing $\tau$ to be an increasing sub-sequence of $T'$ steps in $[0, T]$ starting from $0$. Many works have been dedicated to finding a subsequence and variance schedule to make the sub-sampling steps match the full-step backward process as much as possible~\cite{kong2021fast,bao2021analytic,bao2022estimating}. However, this would inevitably cause downgraded sample quality if each step is Gaussian: as discussed in \citet{salimans2021progressive} and \citet{xiao2021tackling}, a multi-step Gaussian sampler cannot be distilled into a one-step Gaussian sampler without loss of fidelity. 
% Thus sub-sampling 

% In this case, there might exist paths other than imitating the backward process that can reach the data distribution with fewer Gaussian steps. Thus one may also expect to overcome this issue by minimizing the IPM.

% without the need of changing the noise distribution or increasing the number of steps 
% by changing the objective function to fine-tune the samplers.
% Figure 1 illustration.
% (mentioned the numbers downgrade)
% (Notice that such initialization already has good coverage. Combined with a monotonic guarantee, we can expect to achieve better quality via fine-tuning.)
% \subsection{Problem Formulation}
% Consider a DDPM sampler with $T$ steps, and we want to train this sampler to optimize IPM due to the issues discussed in the previous section. 
\subsection{Problem Formulation}
In both cases mentioned above,  there might exist paths other than imitating
the backward process that can reach the data distribution
with fewer Gaussian steps. Thus one may expect to
overcome these issues by minimizing the IPM.

Here we present the formulation of our problem setting. We assume that there is a target data distribution $q_0$. Given a set of critic parameters $\mathcal{A}$ s.t. $\Phi(p_{0}^{\theta},q_0) = \sup_{\alpha \in \mathcal{A}} g(p_0^\theta, f_\alpha, q_0)$ is an IPM, and given a DDPM sampler with $T$ steps parameterized by $\theta$, our goal is to solve: 
\begin{align}
    \min_{\theta} \Phi(p_{0}^{\theta},q_0).
\end{align}

\subsection{Pathwise Derivative Estimation for Shortcut Fine-Tuning: Properties and Potential Issues}
\label{sec: pathwise}
One straightforward approach is to optimize $\Phi(p_0^\theta,q_0)$ using pathwise derivative estimation~\citep{rezende2014stochastic} like GAN training, which we denote as \textbf{SFT} (shortcut fine-tuning). We can recursively define the stochastic mappings:
\begin{equation}
h_{\theta, T}(x_T) := x_T,  
\end{equation}
\begin{equation}
h_{\theta, t}(x_t) := \mu_{\theta}(h_{\theta, t+1}(x_{t+1})) +\epsilon_{t+1},
\end{equation}
\begin{equation}
x_0 = h_{\theta, 0}(x_T)
\end{equation}
where $x_T\sim \mathcal{N}(0,I), \epsilon_{t+1} \sim \mathcal{N}(0,\Sigma_{t+1}), t = 0,...,T-1$. 

Then we can write the objective function as:
\begin{equation}
    \Phi(p_0^\theta, q_0) =  \sup_{\alpha \in \mathcal{A}} \mathop{\mathbb{E}}_{x_T, \epsilon_{1:T}}[f_\alpha(h_{\theta,0}(x_T))]-\mathop{\mathbb{E}}_{x_0 \sim q_0}[f_\alpha(x_0)]
\end{equation}

Assume that $\exists \alpha \in \mathcal{A}$, s.t. $g(p_0^\theta, \alpha, q_0) = \Phi(p_{0}^{\theta},q_0)$. Let $\alpha^*(p_0^{\theta}, q_0) \in \{\alpha:g(p_0^\theta, \alpha, q_0) = \Phi(p_{0}^{\theta},q_0) \}$. When $f_{\alpha}$ is 1-Lipschitz, we can compute the gradient which is similar to WGAN~\citep{arjovsky2017wasserstein}:
\begin{equation}
\nabla_{\theta}\Phi(p_{0}^{\theta},q_0)=\underset{x_T, \epsilon_{1:T}}{\mathbb{E}}\left[\nabla_{\theta}f_{\alpha^*(p_0^{\theta}, q_0)}(h_{\theta,0}(x_T))\right].
\label{eq:wgan}
\end{equation}

\paragraph{Implicit requirements on the family of critics $\mathcal{A}$: gradient regularization.} In Eq.~(\ref{eq:wgan}), we can observe that the critic $f_{\alpha^*}$ needs to provide meaningful gradients (w.r.t. the input) for the generator. If the gradient of the critic happens to be 0 at some generated data points, even if the critic's value could still make sense, the critic would provide no signal for the generator on these points\footnote{For example, MMD with very narrow kernels can produce such critic functions, where each data point defines the center of the corresponding kernel which yields gradient 0.}. Thus GANs trained with IPMs generally need to choose $\mathcal{A}$ such that the gradient of the critic is regularized: For example, Lipschitz constraints like weight clipping~\citep{arjovsky2017wasserstein} and gradient penalty~\citep{gulrajani2017improved} for WGAN, and gradient regularizers for MMD GAN~\citep{arbel2018gradient}. 
% \vspace{-0.3cm}
\paragraph{Potential issues.}Besides the implicit requirements on the critic, there might also be issues when computing Eq.~(\ref{eq:wgan}) in practice. Differentiating a composite function with $T$ steps can cause problems similar to RNNs:
Gradient vanishing may result in long-distance dependency being lost;
Gradient explosion may occur;
Memory usage is high.

% Moreover, since the expectation is taken over the distribution of the whole sequence, the variance could also be high.

\section{Method: Shortcut Fine-Tuning with Policy Gradient (SFT-PG)}
We note that Eq.~(\ref{eq:wgan}) is not the only way to estimate the gradient w.r.t. IPM. In this section, we show that performing gradient descent of $\Phi(p_0^\theta, q_0)$ can be equivalent to policy gradient (Section~\ref{sec: pg}), provide analysis towards monotonic improvement (Section~\ref{sec: mono}) and then present the algorithm design (Section~\ref{sec: algo}).

% parameterization does not leverage the Markovian property of the generated sequences. 

% (subsection)
\subsection{Policy Gradient Equivalence}
% \label{}

% \subsubsection{}
\label{sec: pg}
% Due to the Markovian and stochastic properties of the DDPM samplers, we do not need to : 

By modeling the conditional probability through the trajectory, we provide an alternative way for gradient estimation which is equivalent to policy gradient, without differentiating through the composite functions.

\begin{theorem} (\textbf{Policy gradient equivalence})

Assume that both
$p_{x_{0:T}}^\theta(x_{0:T})f_{\alpha^*(p_0^{\theta}, q_0)}(x_0)$ and $\nabla_{\theta}p_{x_{0:T}}^\theta(x_{0:T})f_{\alpha^*(p_0^{\theta}, q_0)}(x_0)$ are continuous functions w.r.t. $\theta$ and $x_{0:T}$.
Then
\begin{equation}
\begin{aligned}
&\nabla_{\theta}\Phi(p_{0}^{\theta},q_0)=\underset{p_{x_{0:T}}^\theta}{\mathbb{E}}\big[f_{\alpha^*(p_0^{\theta}, q_0)}(x_0) \sum_{t=0}^{T-1}\nabla_{\theta}\log p_t^{\theta}(x_t|x_{t+1}) \big].
\end{aligned}
\label{eq: pg}
\end{equation}
\end{theorem}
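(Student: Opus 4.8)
The plan is to combine the envelope (Danskin) theorem with the classical log-derivative (score-function) identity, thereby sidestepping any differentiation through the composite map $h_{\theta,0}$. First I would regard $\Phi(p_0^\theta,q_0)=\sup_{\alpha\in\mathcal{A}} g(p_0^\theta,f_\alpha,q_0)$ as a pointwise supremum of functions of $\theta$. Because the supremum is attained at $\alpha^*(p_0^{\theta}, q_0)$ (already assumed in the excerpt), the envelope theorem tells us the total derivative in $\theta$ receives no contribution from the $\theta$-dependence of $\alpha^*$: the first-order stationarity of $g$ in $\alpha$ annihilates the cross term $\partial_\alpha g \cdot \partial_\theta \alpha^*$. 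This reduces the claim to
\[
\nabla_\theta \Phi(p_0^\theta,q_0) = \nabla_\theta\, g(p_0^\theta, f_{\alpha^*}, q_0)\big|_{\alpha^*\ \mathrm{fixed}} .
\]
Once $\alpha^*$ is frozen, the term $\mathbb{E}_{x_0\sim q_0}[f_{\alpha^*}(x_0)]$ is independent of $\theta$, so only $\nabla_\theta \mathbb{E}_{x_0\sim p_0^\theta}[f_{\alpha^*}(x_0)]$ survives.

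Next I would lift the expectation from the marginal $p_0^\theta$ to the full trajectory density. Since $f_{\alpha^*}$ depends only on $x_0$ and $p_0^\theta(x_0)=\int p_{x_{0:T}}^\theta(x_{0:T})\,dx_{1:T}$, we may write $\mathbb{E}_{x_0\sim p_0^\theta}[f_{\alpha^*}(x_0)] = \int p_{x_{0:T}}^\theta(x_{0:T})\, f_{\alpha^*}(x_0)\, dx_{0:T}$. The stated continuity of $p_{x_{0:T}}^\theta f_{\alpha^*}$ and of $\nabla_\theta p_{x_{0:T}}^\theta f_{\alpha^*}$ is precisely the hypothesis needed to pass the gradient inside the integral (Leibniz rule), yielding $\int \nabla_\theta p_{x_{0:T}}^\theta(x_{0:T})\, f_{\alpha^*}(x_0)\, dx_{0:T}$.

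Then I would apply the log-derivative identity $\nabla_\theta p_{x_{0:T}}^\theta = p_{x_{0:T}}^\theta\, \nabla_\theta \log p_{x_{0:T}}^\theta$ to rewrite this as an expectation under $p_{x_{0:T}}^\theta$, and finally invoke the Markov factorization $p_{x_{0:T}}^\theta = p_T(x_T)\prod_{t=0}^{T-1}p_t^\theta(x_t|x_{t+1})$. Taking logarithms turns the product into a sum, and because the prior $p_T$ carries no $\theta$-dependence its gradient vanishes, leaving $\nabla_\theta \log p_{x_{0:T}}^\theta = \sum_{t=0}^{T-1}\nabla_\theta \log p_t^\theta(x_t|x_{t+1})$. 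Substituting this back reproduces exactly Eq.~(\ref{eq: pg}).

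I expect the envelope-theorem step to be the main obstacle to make fully rigorous. The log-derivative manipulation and the Markov factorization are routine, and the interchange of gradient and integral is directly guaranteed by the theorem's continuity hypotheses. The delicate point is justifying that the $\theta$-dependence of the optimal critic $\alpha^*$ contributes nothing: this needs the supremum to be attained (assumed) together with enough regularity of $g$ in $\alpha$ near the maximizer. For typical IPM constraint sets, such as the $1$-Lipschitz ball, the maximizer tends to lie on the boundary and need not be unique, so one should appeal to the version of the envelope theorem valid for constrained, possibly non-interior and non-unique maximizers rather than the naive interior first-order-condition argument.
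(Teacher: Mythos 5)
Your proposal is correct and follows essentially the same route as the paper's proof: envelope theorem to discard the $\theta$-dependence of $\alpha^*$, lifting to the trajectory density, exchanging gradient and integral via the continuity hypotheses, then the log-derivative trick and Markov factorization. Your closing caveat about needing a Danskin-type envelope theorem for constrained, possibly non-unique maximizers is a fair point that the paper itself glosses over, but it does not change the argument's structure.
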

% \mathrel{\phantom{=}}
% \vspace{-0.3in}
\begin{proof}
\begin{equation}
\begin{aligned}
&\mathrel{\phantom{=}}\nabla_{\theta}\Phi(p_{0}^{\theta},q_0)\\
&=  \nabla_{\theta}\int p_{0}^\theta(x_0)f_{\alpha^*(p_0^{\theta}, q_0)}(x_0) dx_0\\
&\mathrel{\phantom{=}}+ \nabla_\theta \alpha^*(p_0^{\theta}, q_0) \nabla_{\alpha^*(p_0^{\theta}, q_0)}\int p_{0}^\theta(x_0)f_{\alpha^*(p_0^{\theta}, q_0)}(x_0) dx_0,\\
\end{aligned}
\end{equation}
where $\nabla_{\alpha^*(p_0^{\theta}, q_0)}\int p_{0}^\theta(x_0)f_{\alpha^*(p_0^{\theta}, q_0)}(x_0) dx_0$ is 0 from the envelope theorem. Then we have
\begin{equation}
\begin{aligned}
&\mathrel{\phantom{=}}\nabla_{\theta}\int p_{0}^\theta(x_0)f_{\alpha^*(p_0^{\theta}, q_0)}(x_0) dx_0 \\
&=  \nabla_{\theta}\int \left(\int p_{x_{0:T}}^{\theta}(x_{0:T}) dx_{1:T}\right)f_{\alpha^*(p_0^{\theta}, q_0)}(x_0) dx_0, \\
&=\nabla_{\theta}\int  p_{x_{0:T}}^\theta(x_{0:T})f_{\alpha^*(p_0^{\theta}, q_0)}(x_0) dx_{0:T} \\
&= \int p_{x_{0:T}}^\theta(x_{0:T}) f_{\alpha^*(p_0^{\theta}, q_0)}(x_0) \nabla_{\theta}\log p^{\theta}_{x_{0:T}}(x_{0:T}) dx_{0:T} \\
% &=\underset{p_{x_{0:T}}^\theta}{\mathbb{E}}\big[\nabla_{\theta}f_{\alpha^*(p_0^{\theta}, q_0)}(x_0) \log p^{\theta}_{x_{0:T}}(x_{0:T}) \big]\\
&= \underset{p_{x_{0:T}}^\theta}{\mathbb{E}}\left[f_{\alpha^*(p_0^{\theta}, q_0)}(x_0) \sum_{t=0}^{T-1} \nabla_{\theta}\log p_t^{\theta}(x_t|x_{t+1})  \right],
\end{aligned}
\end{equation}
where the second last equality is from the continuous assumptions to exchange integral and derivative and the log derivative trick. The proof is then complete.
\qedhere
\end{proof}

% but different parameterization. then other critics. zero-order approximation.

% Theorem 1. the first order derivative = policy gradient

% (which condition to use? the strongest)

% Notice different use of the critics. 

% Note that it is only equivalent when we compute the first-order derivative.
\paragraph{MDP construction for policy gradient equivalence.} Here we explain why Eq.~(\ref{eq: pg}) could be viewed as policy gradient. We can construct an MDP with a finite horizon $T$: Treat $p_t^{\theta}(x_t|x_{t+1})$ as a policy, and assume that  transition is an identical mapping such that the action is to choose the next state. Consider reward as
$f_{\alpha^*(p_0^{\theta}, q_0)}(x_0)$ at the final step, and as $0$ at any other steps. Then Eq.~(\ref{eq: pg}) is equivalent to performing policy gradient~\citep{williams1992simple}.

\textbf{Comparing Eq.~(\ref{eq:wgan}) and  Eq.~(\ref{eq: pg}):} 

% We can observe that  However, there are several differences:
\begin{itemize}
    \item Eq.~(\ref{eq:wgan}) uses the gradient of the critic, while Eq.~(\ref{eq: pg}) only uses the value of the critic. This indicates that for policy gradient, weaker conditions are required for critics to provide meaningful guidance for the generator, which means more choices of $\mathcal{A}$ can be applied here.
    % more sensitive to various differences; more meaningful critics make more alignment between true and fake distribution
    % sample space
    \item We compute the sum of gradients for each step in Eq.~(\ref{eq: pg}), which does not suffer from exploding or vanishing gradients. Also, we do not need to track gradients of the generated sequence during $T$ steps.

    \item However, stochastic policy gradient methods usually suffer from higher variance~\citep{mohamed2020monte}. Thanks to similar techniques in RL, we can reduce the variance via a baseline trick, which will be discussed in Section~\ref{sec: baseline}. 
\end{itemize}

% Thus it could utilize a wider range of critics, 
% which can potentially make the critic more sensitive to the change of the generator.

In conclusion, Eq.~(\ref{eq: pg}) is comparable to Eq.~(\ref{eq:wgan}) in expectation, with potential benefits like numerical stability, memory efficiency, and a wider range of the critic family $\mathcal{A}$. It could suffer from higher variance but the baseline trick can help. We denote such kind of method as \textbf{SFT-PG} (shortcut fine-tuning with policy gradient).

% by estimating $\sum_{t=0}^{t-1}$ by expectation on uniform random $t \in[0,T-1]$.

% Third, \textcolor{black}{we make better use of the critic value empirically since the critic is trained to optimize the value, not the gradient.}

\paragraph{Empirical comparison.} We conduct experiments on some toy datasets (Fig~\ref{fig: comparison}), where we show the performance of Eq.~(\ref{eq: pg}) with the baseline trick is at least comparable to Eq.~(\ref{eq:wgan}) at convergence when they use the same gradient penalty (GP) for critic regularization. We further observe SFT-PG with a newly proposed baseline regularization (B) enjoys a noticeably better  performance compared to SFT with GP. The regularization methods will be introduced in Section~\ref{sec: regularization}. Experimental details are in Section~\ref{sec: regularization exp}.

\subsection{Towards Monotonic Improvement}
\label{sec: mono}
\begin{figure}
\centering
\begin{subfigure}[b]{0.75\linewidth}
%{0.49\linewidth}
\centering
\includegraphics[width=0.6\linewidth]{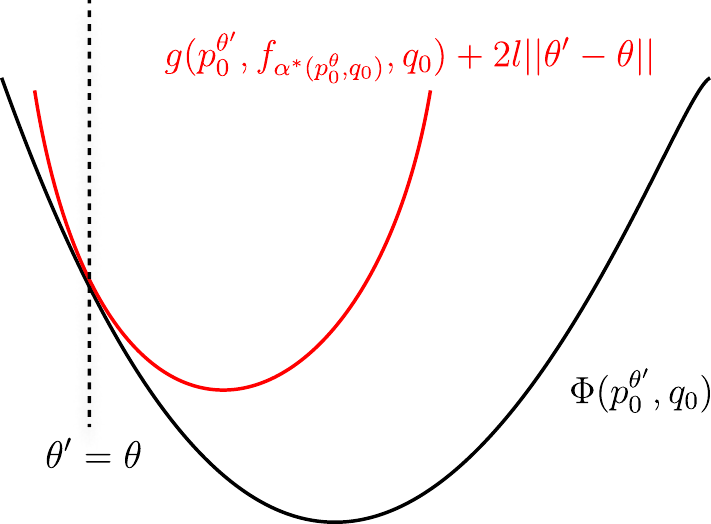}
\end{subfigure}
\caption{Illustration of the surrogate function given a fixed critic (red), and the actual objective $\Phi(p_{0}^{\theta'},q_{0})$ (dark). The horizontal axis represents the variable $\theta'$. Starting from $\theta$, a descent in the surrogate function is a sufficient condition for a descent in $\Phi(p_{0}^{\theta'},q_{0})$.}
\label{fig: surrogate}
\end{figure}

The gradient update discussed in Eq.~(\ref{eq: pg}) only supports one step of gradient update, given a fixed critic $f_{\alpha^*(p_0^\theta, q_0)}$ that is optimal to the current $\theta$. Questions remain: When is our update guaranteed to get improvement?  Can we do more than one update to get a potential descent? We answer the questions by providing a surrogate function of the IPM.

% Proof of the surrogate function with Lipschitz assumption.

% For simplicity, we rewrite $p_{0}^\theta$ as $p_\theta$, and $q_0$ as $q$.
% \vspace{0.1cm}
\begin{theorem} (\textbf{The surrogate function of IPM})

Assume that $g(p_0^{\theta}, f_\alpha, q_0)$ is Lipschitz w.r.t. $\theta$, given $q_0$ and $\alpha\in\mathcal{A}$. Given a fixed critic $f_{\alpha^*(p_0^\theta, q_0)}$, there exists $l\geq 0$ such that $\Phi(p_0^{\theta'}, q_0)$ is upper bounded by the surrogate function below:
\begin{equation}
    \Phi(p_0^{\theta'}, q_0) \leq g(p_0^{\theta'},f_{\alpha^*(p_0^\theta,q_0)}, q_0) + 2l||\theta'-\theta||.
\end{equation}
% which shows that .
\label{mono}
\end{theorem}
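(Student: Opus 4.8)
The plan is to compare two different optimal critics: $\alpha^{*} := \alpha^{*}(p_0^{\theta}, q_0)$, which is optimal for the base parameter $\theta$ and is the critic that appears inside the surrogate, and $\alpha^{**} := \alpha^{*}(p_0^{\theta'}, q_0)$, which is optimal for the perturbed parameter $\theta'$ and by definition realizes $\Phi(p_0^{\theta'}, q_0)$. The whole argument is a chaining of the Lipschitz inequality for $g$ together with the supremum definition of $\Phi$, shuttling back and forth between the parameters $\theta$ and $\theta'$ while keeping the critic fixed at each step.

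Concretely, I would first record that the Lipschitz hypothesis gives, for each fixed $\alpha \in \mathcal{A}$, the two-sided bound $|g(p_0^{\theta'}, f_\alpha, q_0) - g(p_0^{\theta}, f_\alpha, q_0)| \le l\,\|\theta' - \theta\|$ for a constant $l \ge 0$ that is uniform over $\mathcal{A}$. Writing $\Phi(p_0^{\theta'}, q_0) = g(p_0^{\theta'}, f_{\alpha^{**}}, q_0)$ by optimality of $\alpha^{**}$, I would transport this value back to $\theta$ by the Lipschitz bound, $g(p_0^{\theta'}, f_{\alpha^{**}}, q_0) \le g(p_0^{\theta}, f_{\alpha^{**}}, q_0) + l\,\|\theta' - \theta\|$. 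Then, because $\alpha^{*}$ is optimal for $\theta$, the supremum definition of $\Phi$ at $\theta$ gives $g(p_0^{\theta}, f_{\alpha^{**}}, q_0) \le \Phi(p_0^{\theta}, q_0) = g(p_0^{\theta}, f_{\alpha^{*}}, q_0)$, which is exactly where the critic $\alpha^{**}$ drops out. Finally I would transport this last quantity forward to $\theta'$ by the Lipschitz bound once more, $g(p_0^{\theta}, f_{\alpha^{*}}, q_0) \le g(p_0^{\theta'}, f_{\alpha^{*}}, q_0) + l\,\|\theta' - \theta\|$. Concatenating the three inequalities yields $\Phi(p_0^{\theta'}, q_0) \le g(p_0^{\theta'}, f_{\alpha^{*}}, q_0) + 2l\,\|\theta' - \theta\|$, the claimed surrogate, with the factor $2$ coming precisely from the two transport steps.

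I expect the main subtlety to be the quantifier order in the constant $l$. The hypothesis only asserts that $g$ is Lipschitz in $\theta$ for each individual $\alpha$, so a priori the Lipschitz constant could depend on $\alpha$; the argument, however, needs a single $l$ that simultaneously controls the step at $\alpha^{*}$ and at $\alpha^{**}$. I would therefore take $l$ to be an upper bound on the supremum of the per-critic Lipschitz constants over $\mathcal{A}$, reading the existence clause ``there exists $l \ge 0$'' as the assumption that this supremum is finite. Beyond this, the only care needed is to apply the Lipschitz inequality in the correct direction at each of the two transport steps and to invoke the supremum definition of $\Phi$ at $\theta$ rather than at $\theta'$, so that $\alpha^{**}$ is eliminated cleanly; everything else is routine triangle-inequality-style bookkeeping.
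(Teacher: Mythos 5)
Your proof is correct and is essentially the paper's own argument: once the integral decompositions in the paper's Appendix proof are unrolled, it uses exactly your three inequalities --- optimality of $\alpha^{*}$ at $\theta$ to eliminate $\alpha^{**}$, plus the Lipschitz bound applied once at each of the two critics, which is where the factor $2l$ comes from. Your chained presentation is a cleaner rendering of the same content, and your remark that one needs a single $l$ controlling both (indeed, all) optimal critics makes explicit a uniformity assumption that the paper's proof also relies on implicitly.
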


Proof of Theorem~\ref{mono} can be found in Appendix~\ref{app: mono}. Here we provide an illustration of Theorem~\ref{mono} in Fig~\ref{fig: surrogate}. Given a critic that is optimal w.r.t. $\theta$, $\Phi(p_0^{\theta'}, q_0)$ is unknown if $\theta \neq \theta'$. But if we can get a descent of the surrogate function, we are also guaranteed to get a descent of $\Phi(p_0^{\theta'}, q_0)$, which facilitates more potential updates even if $\theta' \neq \theta$.
% \vspace{-0.5cm}
Moreover, using the Lagrange multiplier, we can convert minimizing the surrogate function to a constrained optimization problem to optimize $g(p_0^{\theta'},f_{\alpha^*(p_0^\theta, q_0)}, q_0)$ with the constraint that $||\theta'-\theta||\leq \delta$ for some $\delta >0$. Following this idea, one simple trick is to perform $n_{\text{generator}}$ steps of gradient updates with a small learning rate, and clip the gradient norm with threshold $\gamma$. We present the empirical effect of such simple modification in Section~\ref{sec: clipping}, Table~\ref{gradnorm}.

% Indicate small learning rate and constrained optimization. The easiest way is to set a threshold $\gamma$ on the gradient norm.
\paragraph{Discussion.}
One may notice that Theorem~\ref{mono} is similar in spirit to Theorem 1 in TRPO~\citep{schulman2015trust}, which provides a surrogate function for a fixed but unknown reward function. In our case, the reward function $f_{\alpha^*(p_0^\theta, q_0)}$ is known for the current $\theta$ but changing: It is dependent on the current $\theta$ so it remains unknown for $\theta'\neq\theta$. The proof techniques are also different, but they both estimate an unknown part of the objective function.

% Compared to TRPO, we deal with a known but changing reward function, meaning that it cares about the whole distribution instead of just data coverage. 

% 3) Learning from demonstration and importance sampling

\subsection{Algorithm Design}
\label{sec: algo}
% \subsubsection{Alternative training}

In the previous sections, we only consider the case where we have an optimal critic function given $\theta$. In the training, we adopt similar techniques in WGAN~\citep{arjovsky2017wasserstein} to perform alternative training of the critic and generator in order to approximate the optimal critic. Consider the objective function below: 
\begin{equation} 
    \min_{\theta} \max_{\alpha\in \mathcal{A}}  g(p_0^\theta, f_\alpha, q_0).
\end{equation}

Now we discuss techniques to reduce the variance of the gradient estimation and regularize the critic, and then give an overview of our algorithm.

% Naive training does not work well.
% Min-max training

\subsubsection{Baseline Function for Variance Reduction}
\label{sec: baseline}

Given a critic $\alpha$, we can adopt a technique widely used in policy gradient to reduce the variance of the gradient estimation in Eq.~(\ref{eq: pg}). Similar to \citet{schulman2015high}, we can subtract a baseline function $V^{\omega}_{t+1}(x_{t+1})$ from the cumulative reward $f_{\alpha} (x_0)$, without changing the expectation:
\begin{equation}
\begin{aligned}
&\mathrel{\phantom{=}}\nabla_{\theta} g(p_0^\theta, f_\alpha, q_0)\\
&=
\underset{p_{x_{0:T}}^\theta}{\mathbb{E}}\left[f_{\alpha}(x_0) \sum_{t=0}^{T-1} \nabla_{\theta}\log p_t^{\theta}(x_t|x_{t+1}) \right]\\
&=\underset{p_{x_{0:T}}^\theta}{\mathbb{E}}\left[\sum_{t=0}^{T-1}(f_{\alpha}(x_0) - V_{t+1}^\omega(x_{t+1}))\nabla_{\theta}\log p_t^{\theta}(x_t|x_{t+1})  \right],
\end{aligned}
\label{eq: baseline}
\end{equation}
where the optimal choice of $V_{t+1}^\omega(x_{t+1})$ to minimize the variance would be $V_{t+1}(x_{t+1},\alpha):=\underset{p_{x_{0:T}}^\theta}{\mathbb{E}}[f_{\alpha}(x_0)|x_{t+1}]$. Detailed derivation of Eq.~(\ref{eq: baseline}) can be found in Appendix~\ref{app: baseline}.
% where $V_{t+1}(x_{t+1},\alpha) := \mathbb{E}_{p_{x_{0:T}}^\theta}[f_{\alpha}(x_0)|x_{t+1}]$. 
% We can also define $V_{0}(x_0, \alpha):=\mathbb{E}_{ p_{x_{0:T}}^\theta}[f_{\alpha(p_0^\theta,q_0)}(x_0)]$.
% (V should not be dependent on $\theta$).
Thus, given a critic $\alpha$ and a generator $\theta$, we can train a value function $V_{t+1}^\omega$ by minimizing the objective below:
\begin{equation}
    R_{B}(\alpha, \omega, \theta)=\underset{p_{x_{0:T}}^\theta}{ \mathbb{E}}\left[\sum_{t=0}^{T-1}(f_\alpha(x_0) -V_{t+1}(x_{t+1},\alpha))^2\right].
\end{equation}
% $(f-v)^2$

\subsubsection{Choices of $\mathcal{A}$: Regularizing the Critic}
\label{sec: regularization}
Here we discuss different choices of $\mathcal{A}$, which indicates different regularization methods for the critic. 

\paragraph{Lipschitz regularization.} If we choose $\mathcal{A}$ to include parameters of all 1-Lipschitz functions, we can adopt regularization as WGAN-GP~\citep{gulrajani2017improved}:
\begin{equation}
    R_{GP}(\alpha, \theta) =  \underset{\hat{x_0}}{\mathbb{E}}\left[(||\nabla_{x_0}f_{\alpha}(x_0)||-1)^2\right],
\end{equation}
where $\hat{x_0}$ is sampled uniformly on the line segment between $x_0' \sim p_0^\theta$ and $x_0'' \sim q_0$. $f_\alpha$ can be trained to maximize $g(p_0^\theta, f_\alpha, q_0) -\eta R_{GP}(\alpha, \omega, \theta)$, $\eta>0$ is the regularization coefficient.

\paragraph{Reusing baseline for critic regularization.} As discussed in Section~\ref{sec: pg}, since we only use the critic value during updates, now we can afford a potentially wider range of critic family $\mathcal{A}$. Some regularization on $f_\alpha$ is still needed; Otherwise its value can explode. Also, regularization is shown to be beneficial for local convergence~\citep{mescheder2018training}.  So we consider regularization that can be weaker than gradient constraints, such that the critic is more sensitive to the changes of the generator, which could be favorable when updating the critic for a fixed number of training steps.

% Apparently, we cannot naively clamp the value within some range to avoid exploding: if it simply assigns the same reward for all generated data, no update will be done. 

We found an interesting fact that the loss $R_{B}(\alpha,\omega,\theta)$ can be \emph{reused} to regularize the value of $f_\alpha$ instead of the gradient, which implicitly defines a set $\mathcal{A}$ that shows empirical benefits in practice. 

Define
\begin{equation}
L(\alpha, \omega, \theta) := g(p_0^\theta, f_\alpha, q_0) -\lambda R_{B}(\alpha, \omega, \theta).
\label{eq: b and c}
\end{equation}
Given $\theta$, our critic $\alpha$ and baseline $\omega$ can be trained together to maximize $L(\alpha, \omega, \theta)$. 

We provide an explanation of such kind of implicit regularization. During the update, we can view $V^\omega_{t+1}$ as an approximation of the expected value of $f_{\alpha}$ from the previous step. The regularization provides a trade-off between maximizing $g(p_0^\theta, f_\alpha, q_0)$ and minimizing changes in the expected value of $f_\alpha$, preventing drastic changes in the critic and stabilizing the training. Intuitively, it helps local convergence when both the critic and generator are already near-optimal: there is an extra cost for the critic value to diverge away from the optimal value. As a byproduct, it also makes the baseline function easier to fit since the regularization loss is reused.

\paragraph{Empirical comparison: baseline regularization and gradient penalty.} We present a comparison of gradient penalty (GP) and baseline regularization (B) for policy gradient training (SFT-PG) in Section~\ref{sec: regularization exp}, Fig~\ref{fig: comparison} on toy datasets, which shows in policy gradient training, the baseline function performs comparably well or even better than gradient penalty.

% \begin{lemma}
    
% \end{lemma}
% \textcolor{black}{(Currently, cannot find a good theoretical explanation of whether it leads to local convergence.)}

\subsubsection{Putting Together: Algorithm Overview}

Now we are ready to present our algorithm. 
% \begin{equation}
% L(\theta) = \underset{p_{x_{0:T}}^\theta}{\mathbb{E}}\big[\sum_{t}\log p_t^{\theta}(x_t|x_{t+1}) (f_{\alpha}(x_0) - V_{t+1}^{\omega}(x_{t+1})) \big]
% \end{equation}
% Define
% \begin{equation}
% L(\alpha, \theta, \omega) = g(p_0^\theta, f_\alpha, q_0) -\lambda R(\alpha, \omega, \theta), 
% \end{equation}
% where $\lambda$ is the hyperparameter for baseline regularization. 
Our critic $\alpha$ and baseline $\omega$ are trained to maximize $L(\alpha, \omega, \theta) = g(p_0^\theta, f_\alpha, q_0) -\lambda R_{B}(\alpha, \omega, \theta)$, and the generator is trained to minimize $g(p_0^\theta, f_\alpha, q_0)$ via Eq.~(\ref{eq: baseline}). To save memory usage, we use a buffer $\mathcal{B}$ that contains $\{x_{t+1}, x_t, x_0, t\}$ generated from the current generator without tracking the gradient, and randomly sample a batch from the buffer to compute Eq.~(\ref{eq: baseline}) and then perform backpropagation. The maximization and minimization steps are performed alternatively. See details in Alg~\ref{alg}.

\begin{algorithm}[]
  \caption{Shortcut Fine-Tuning with Policy Gradient and Baseline Regularization: SFT-PG (B)}

% \textbf{Require:} 
\textbf{Input}: $n_{\text{critic}}$, $n_{\text{generator}}$, batch size $m$, critic parameters $\alpha$, baseline function parameter $\omega$ , pretrained generator $\theta$, regularization hyperparameter $\lambda$

\begin{algorithmic}

\WHILE{$\theta$ not converged}
\STATE Initialize trajectory buffer $\mathcal{B}$ as $\emptyset$
 \FOR{$i$ = 0,...,$n_{\text{critic}}$}
\STATE Obtain $m$ i.i.d. samples from $p^{\theta}_{x_{0:T}}$ 
\STATE Add all $\{x_{t+1}, x_t, x_0, t\}$ to $\mathcal{B}$, $t=0,...,T-1$
\STATE Obtain $m$ i.i.d. samples from $q_0$ 
% \STATE Add $m$ samples from $p^{\theta}_{x_{0:T}}$ to 
% \STATE Estimate $L(\alpha, \theta, \omega)$ 
\STATE Update $\alpha$ and $\omega$ via maximizing Eq.~(\ref{eq: b and c})
\ENDFOR

\FOR{$j$ = 0,...,$n_{\text{generator}}$}
\STATE Obtain $m$ samples of $\{x_{t+1}, x_t, x_0, t\}$ from $\mathcal{B}$
% \STATE Estimate Eq.~(\ref{eq: baseline}) 
\STATE Update $\theta$ via policy gradient according to Eq.~(\ref{eq: baseline}) 
\ENDFOR
\ENDWHILE
\end{algorithmic}
\label{alg}
\end{algorithm}
\vspace{-0.1cm}
% \subsection{Discussions/Related works}
% \subsubsection{Comparison with GAN}
% General GAN: temporal information.

% \textcolor{black}{4. Theoretically, our surrogate function is specifically for IPM, not JS, which is used by them.}

% Algorithm.

\section{Related Works}
% (To clarify differences w.r.t. existing works. Outlines only.)
\label{sec: related}

\paragraph{GAN and RL.}
There are works using ideas from RL to train GANs~\citep{yu2017seqgan,wang2017irgan, sarmad2019rl, bai2019model}. The most relevant work is SeqGAN~\citep{yu2017seqgan}, which uses policy gradient to train the generator network. There are several main differences between their settings and ours. First, different GAN objectives are used: SeqGAN uses the JS divergence while we use IPM. In SeqGAN, the next token is dependent on tokens generated from all previous steps, while in diffusion models the next image is only dependent on the model output from one previous step; Also, the critic takes the whole generated sequence as input in SeqGAN, while we only care about the final output. Besides, in our work, rewards are mathematically derived from performing gradient descent w.r.t. IPM, while in SeqGAN, rewards are designed manually. In conclusion, different from SeqGAN, we propose a new policy gradient algorithm to optimize the IPM objective, with a novel analysis of monotonic improvement conditions and a new regularization method for the critic.

\paragraph{Diffusion and GAN.} There are other works combining diffusion and GAN training:  \citet{xiao2021tackling} consider multi-modal noise distributions generated by GAN to enable fast sampling; \citet{zheng2022truncated} considers a truncated forward process by replacing the last steps in the forward process with an autoencoder to generate noise, and start with the learned autoencoder as the first step of denoising and then continue to generate data from the diffusion model; Diffusion GAN~\citep{wang2022diffusion} perturbs the data with an adjustable number of steps, and minimizes JS divergence for all intermediate steps by training a multi-step generator with a time-dependent discriminator. To our best knowledge, there is no existing work using GAN-style training to fine-tune a pretrained DDPM sampler.

\paragraph{Fast samplers of DDIM and more.} There is another line of work on fast sampling of DDIM~\citep{song2020denoising}, for example, knowledge distillation~\citep{luhman2021knowledge,salimans2021progressive} and solving ordinary
differential equations (ODEs) with fewer steps~\citep{liu2022pseudo,lu2022dpm}. Samples generated by DDIM are generally less diverse than DDPM~\citep{song2020denoising}. 
% Contemporary work of MMD. 
Also, fast sampling is generally easier for DDIM samplers (with deterministic Markov chains) than DDPM samplers, since it is possible to combine multiple deterministic steps into one step without loss of fidelity, but not for combining multiple Gaussian steps as one~\citep{salimans2021progressive}. Fine-tuning DDIM samplers with deterministic policy gradient for fast sampling also seems possible, but deterministic policies may suffer from suboptimality, especially in high-dimensional action space~\citep{silver2014deterministic}, though it might require fewer samples. Also, it becomes less necessary since distillation is already possible for DDIM. 

Moreover, there is also some recent work that uses sample quality metrics to enable fast sampling. Instead of fine-tuning pretrained models, \citet{watson2021learning} propose to optimize the hyperparameters of the sampling schedule for a family of non-Markovian samplers by differentiating through KID~\citep{binkowski2018demystifying}, which is calculated by pretrained inception features. It is followed by a contemporary work that fine-tunes pretrained DDIM models using MMD calculated by  pretrained features~\citep{aiello2023fast}, which is similar to the method discussed in Section~\ref{sec: pathwise} but with a fixed critic and a deterministic sampling chain. Generally speaking, adversarially trained critics can provide stronger signals than fixed ones and are more helpful for training~\citep{li2017mmd}. As a result, besides the potential issues discussed in Section~\ref{sec: pathwise}, such training may also suffer from sub-optimal results when $p^\theta_0$ is not close enough to $q_0$ at initialization, and is highly dependent on the choice of the pretrained feature.

% So we do not compare to fast DDIM samplers in this work. The deterministic property of DDIM also makes it impossible to do a policy gradient fine-tuning since we cannot explicitly model the distribution. 

% \textcolor{black}{Others? What else should be here?}

\section{Experiments}
\label{sec: full exp}

\begin{figure*}[ht]
\centering
\scalebox{1.0}{
\begin{subfigure}[b]{\linewidth}
    \begin{subfigure}[b]{0.24\linewidth}%{0.49\linewidth}
     \includegraphics[width=\linewidth]{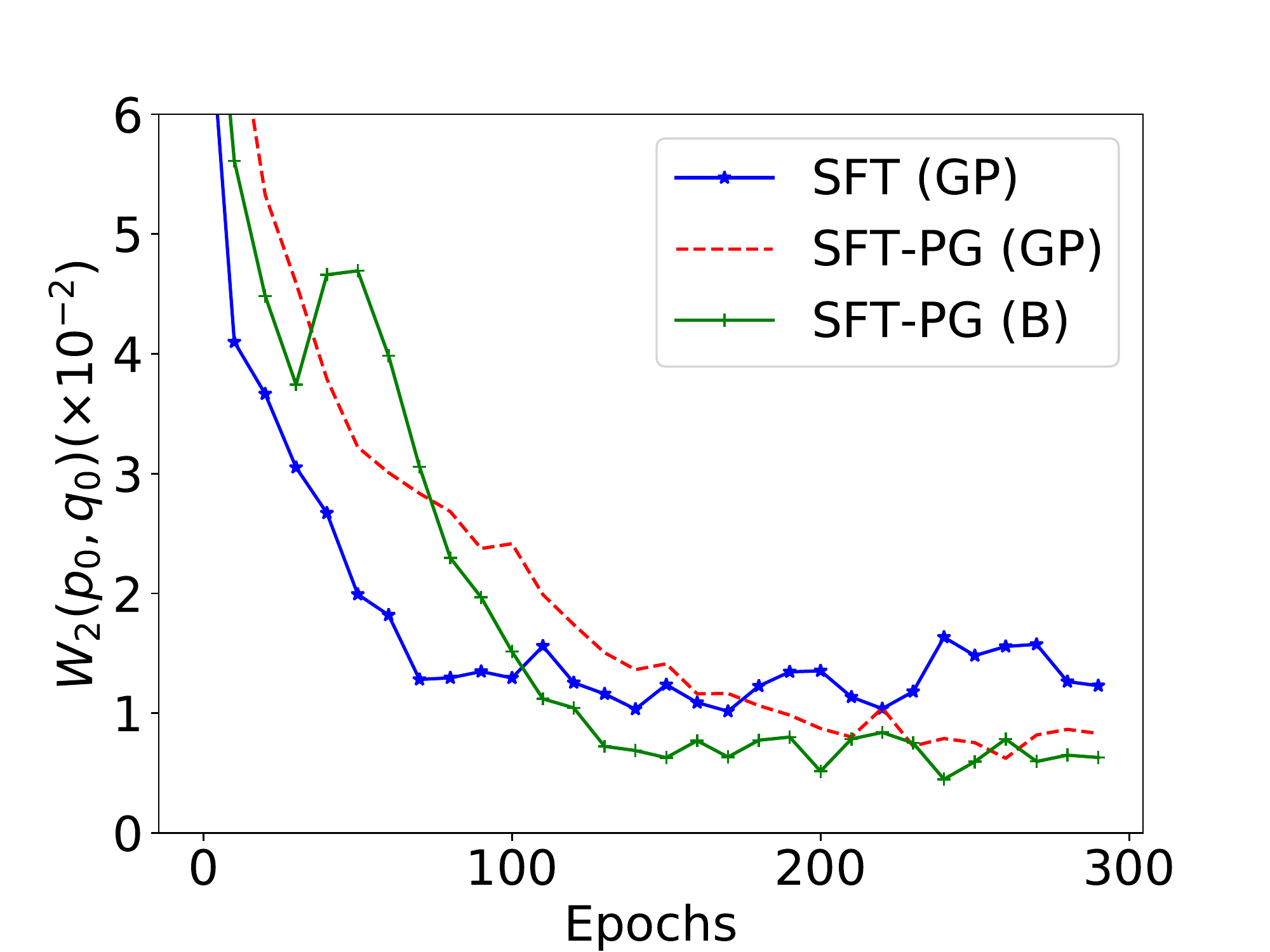}
\caption{Training curves of swiss roll}
\label{fig:swiss}
     \end{subfigure}
     \hfill
         \begin{subfigure}[b]{0.24\linewidth}%{0.49\linewidt49
     \includegraphics[width=\linewidth]{swiss_WGAN.pdf}
\caption{Roll, SFT (GP)}
\label{fig:swiss-wgan}
     \end{subfigure}
    \hfill
    \begin{subfigure}[b]{0.24\linewidth}%{0.49\linewidth}
\includegraphics[width=\linewidth]{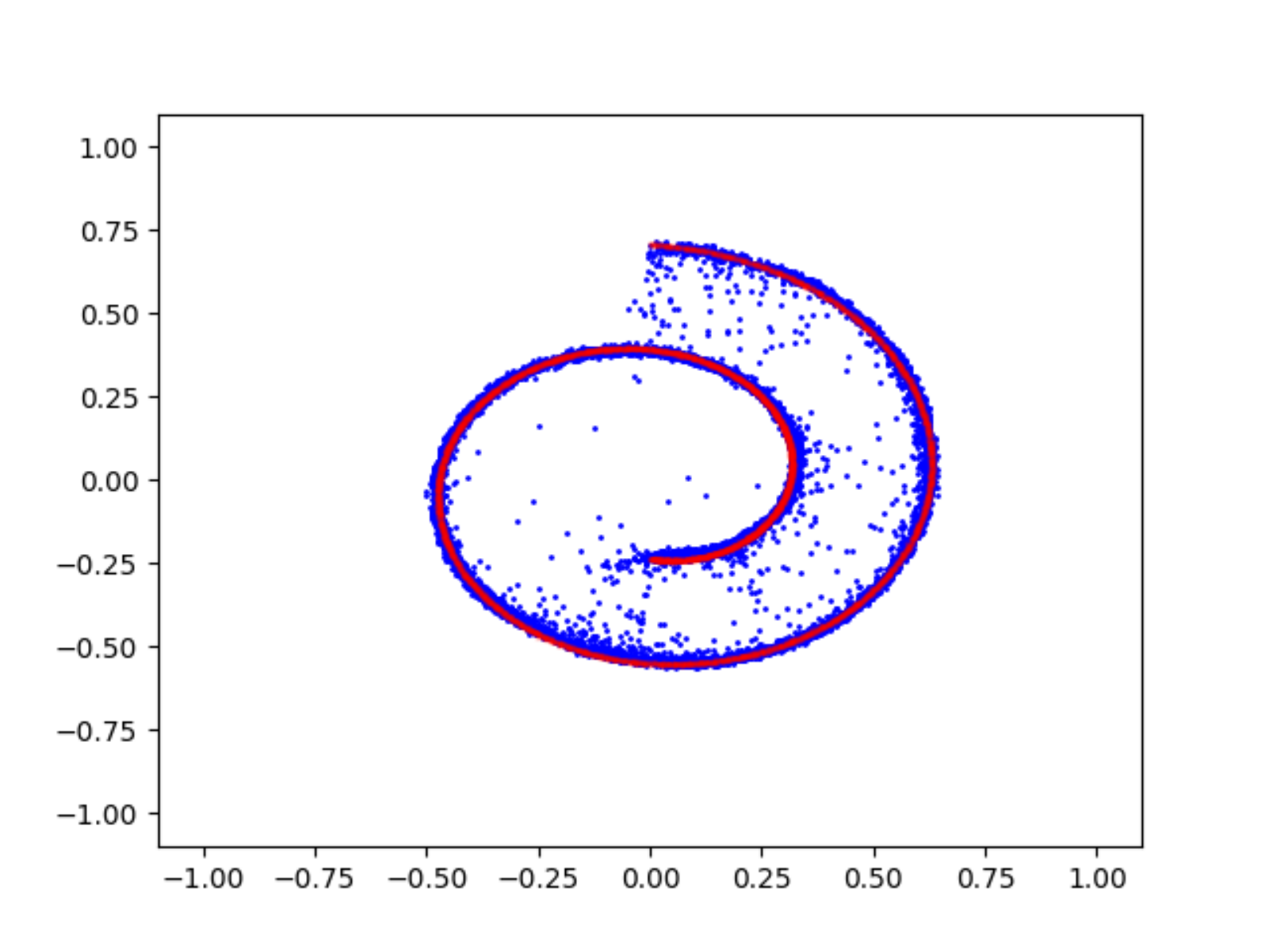}
\caption{Roll, SFT-PG (GP)}
\label{fig:swiss-gp}
\end{subfigure}
\hfill
    \begin{subfigure}[b]{0.24\linewidth}%{0.49\linewidth}
\includegraphics[width=\linewidth]{swiss51.pdf}
\caption{Roll, SFT-PG (B)}
\label{fig:swiss-ft}
     \end{subfigure}
\vfill
\begin{subfigure}[b]{0.24\linewidth}%{0.49\linewidth}
\includegraphics[width=\linewidth]{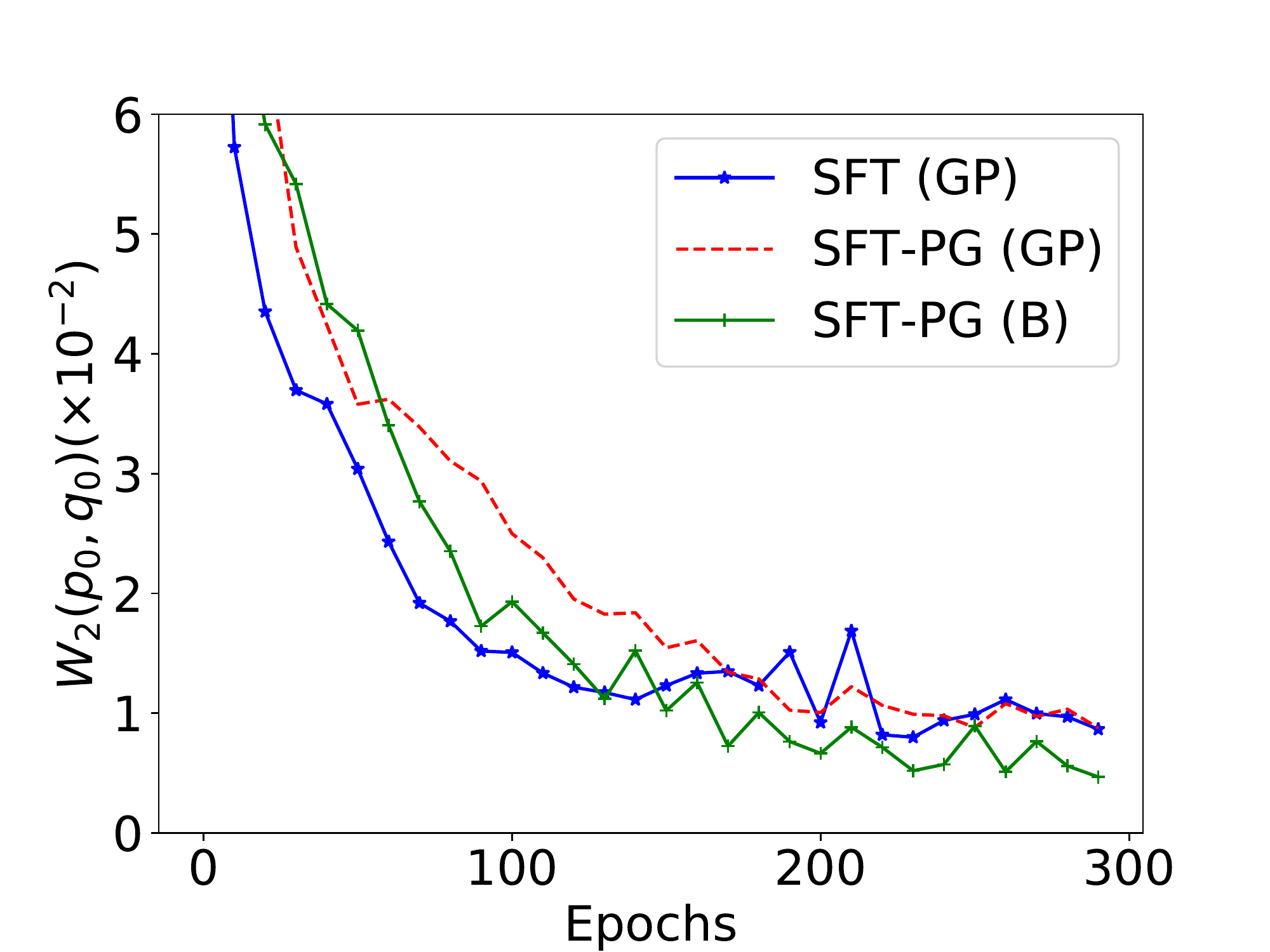}
\caption{Training curves of moons}
\label{fig:moon}
\end{subfigure}
     % \caption{Training curves of WGAN-GP~\citep{gulrajani2017improved} and Policy Gradient GAN.}
\hfill
\begin{subfigure}[b]{0.24\linewidth}
\includegraphics[width=\linewidth]{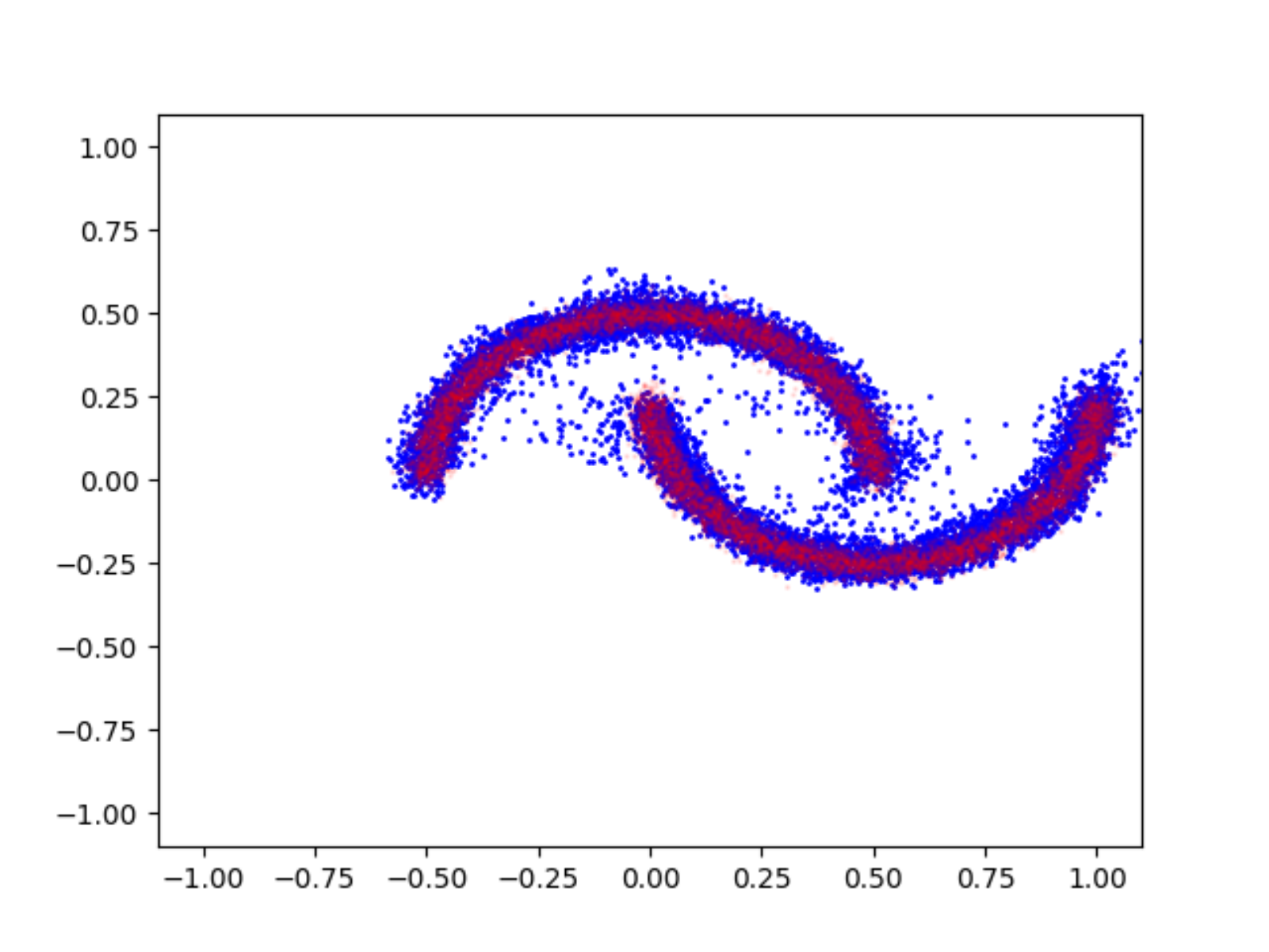}
\caption{Moons, SFT (GP)}
\label{fig:moon-wgan}
\end{subfigure}
\hfill
\begin{subfigure}[b]{0.24\linewidth}%{0.49\linewidth}
\includegraphics[width=\linewidth]{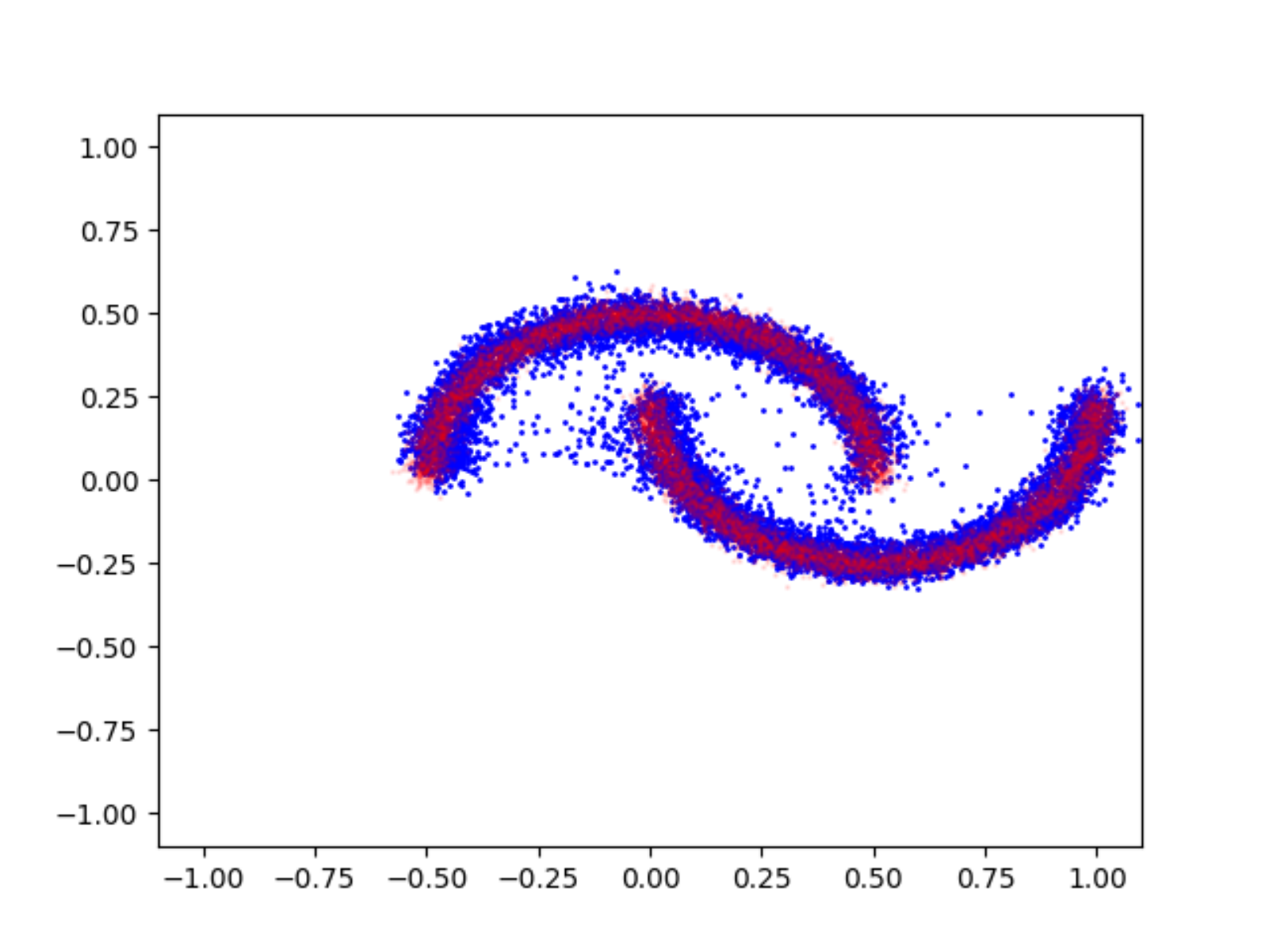}
\caption{Moons, SFT-PG (GP)}
\label{fig:moon-gp}
\end{subfigure}
\hfill
\begin{subfigure}[b]{0.24\linewidth}
\includegraphics[width=\linewidth]{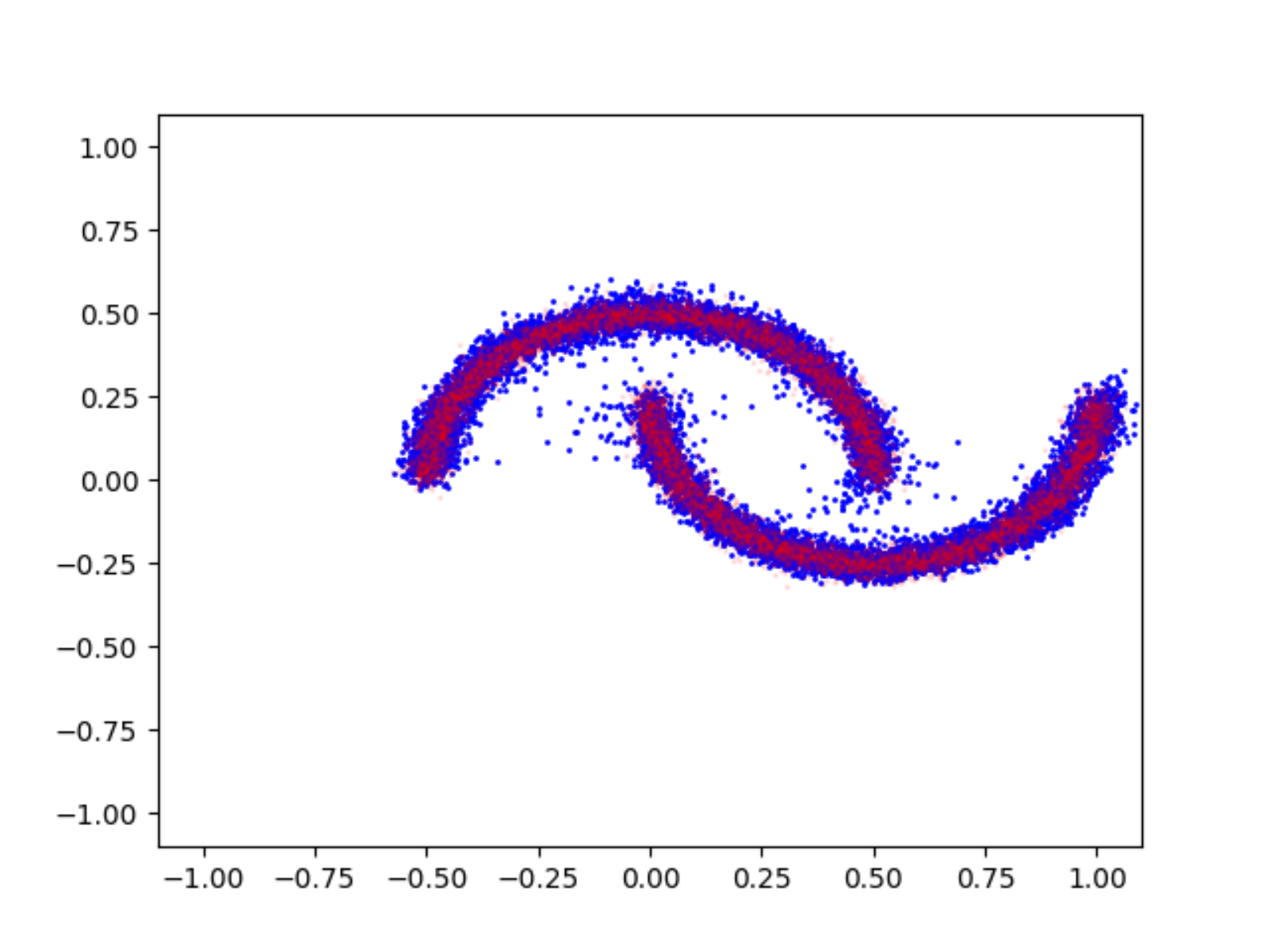}
\caption{Moons, SFT-PG (B)}
\label{fig:moon-ft}
\end{subfigure}
\end{subfigure}
}
\caption{Training curves~(\ref{fig:swiss}, \ref{fig:moon}) and 10K randomly generated samples from SFT (GP)~(\ref{fig:swiss-wgan}, \ref{fig:moon-wgan}),  SFT-PG (GP)~(\ref{fig:swiss-gp}, \ref{fig:moon-gp}), and SFT-PG  (B)~(\ref{fig:swiss-ft}, \ref{fig:moon-ft}) at convergence. In the visualizations, red dots indicate the ground truth data, and blue dots indicate generated data. We can observe that SFT-PG (B) produces noticeably better distributions, which is the result of utilizing a wider range of critics.}
\label{fig: comparison}
\end{figure*}

% (Outlines only, writing not finished but figures and tables are complete)
 
% \textcolor{black}{Q: Are the experimental results rich enough? More pictures?}

In this section, we aim to answer the following questions:
\begin{itemize}
    \item (Section~\ref{sec: poc}) Does the proposed algorithm SFT-PG (B) work in practice? 
    \item (Section~\ref{sec: regularization exp}) How does  SFT-PG (Eq.~(\ref{eq: pg})) work compared to SFT (Eq.~(\ref{eq:wgan})) with the same regularization (GP), and how does baseline regularization (B) compare to gradient penalty (GP) in SFT-PG?
    \item (Section~\ref{sec: clipping}) Do more generator steps with gradient clipping work as discussed in Section~\ref{sec: mono}?
    \item (Section~\ref{sec: benchmark}) Does the proposed fine-tuning SFT-PG (B) improve existing fast samplers of DDPM on benchmark datasets?
\end{itemize}

Code is available at \url{https://github.com/UW-Madison-Lee-Lab/SFT-PG}.
% \subsection{Setup}
\subsection{Setup}
Here we provide the setup of our training algorithm on different datasets.  Model architectures and training details can be found in Appendix~\ref{app: exp}. 
\paragraph{Toy datasets.}
The toy datasets we use are swiss roll and two moons~\citep{scikit-learn}. We use $\lambda=0.1$, $n_{\text{critic}} = 5, n_{\text{generator}} = 1$ with no gradient clipping. For evaluation, we use the Wasserstein-2 distance on 10K samples from $p_0$ and $q_0$ respectively, calculated by POT~\citep{flamary2021pot}. 
\paragraph{Image datasets.}
We use MNIST~\citep{lecun1998gradient}, CIFAR-10~\citep{krizhevsky2009learning} and CelebA~\citep{liu2015deep}. For hyperparameters, we choose $\lambda=1.0$, $n_{\text{critic}} = 5, n_{\text{generator}} = 10$,  $\gamma=0.1$, except when testing different choices of $n_{\text{generator}}$ and $\gamma$ in MNIST, where we use $n_{\text{generator}} = 5$ and varying $\gamma$. For evaluation, we use FID~\citep{heusel2017gans} measured by 50K samples generated from $p_0^\theta$ and $q_0$ respectively. 

% The generative network is U-Net as in \citet{ho2020denoising}. For the critic function, we use a CNN with leaky ReLU as in WGAN~\citep{arjovsky2017wasserstein} and find that it already demonstrates good performance. 

% \paragraph{Image datasets}

% \paragraph{Evaluation}

% 1) 
% % Train from scratch. 
% % (How does it compare to WGAN? How does value regularization compare with gradient penalty?)

% 2) 

% 3) 

% Models, datasets, evaluation metrics, etc. Details are in the appendix.

% \subsection{Training and evaluation}

\begin{table}[h]
\centering
\vskip 0.08in
\scalebox{0.9}{
\begin{tabular}{lc}
\toprule
\textbf{Method}  &\textbf{$\mathbf{W_2(p_0^\theta,q_0)}$ ($\times 10^{-2}$) ($\downarrow$)}   \\
 \hline
$T= 10$, DDPM & 8.29\\
$T= 100$, DDPM  & 2.36\\
$T= 1000$, DDPM & 1.78 \\
$T=10$, SFT-PG (B) &\textbf{0.64} \\
% T=100, fine-tuned &\textbf{0.63}\\
 \bottomrule
\end{tabular}
}
\vspace{0.1in}
\caption{Comparison of DDPM models and our fine-tuned model on the swiss roll dataset.}
\label{tab: swiss}
% \vspace{-0.3cm}
\end{table}
\subsection{Proof-of-concept Results}

In this section, we fine-tune pretrained DDPMs with $T=10$, and present the effect of the proposed algorithm SFT-PG with baseline regularization on toy datasets. We present the results of different gradient estimations discussed in Section~\ref{sec: pg}, different critic regularization methods discussed in Section~\ref{sec: regularization}, and the training technique with more generator steps discussed Section~\ref{sec: mono}.

% Also works when training from scratch for swiss roll and moons. 
% Case 1.
\subsubsection{Improvement from Fine-Tuning}
\label{sec: poc}
On the swiss roll dataset, we first train a DDPM with $T=10$ till convergence, and then use it as initialization of our fine-tuning. 
As in Table~\ref{tab: swiss}, our fine-tuned sampler with 10 steps can get better Wasserstein distance not only compared to the DDPM with $T=10$, but can even outperform DDPM with $T=1000$, which is reasonable since we directly optimize the IPM objective. \footnote{Besides, our algorithm also works when training from scratch with a final performance comparable to fine-tuning, but it will take longer time to train.}
The training curve and the data visualization can be found in Fig~\ref{fig:swiss} and Fig~\ref{fig:swiss-ft}.

% 
% \vspace{-0.2cm}

% We also get results similar to finetuning by training from scratch.
% \vspace{-0.3in}
\subsubsection{Effect of Different Gradient Estimations and Regularizations}
\label{sec: regularization exp}

On the toy datasets, we compare gradient estimation SFT-PG and SFT, both with gradient penalty (GP). \footnote{For gradient penalty coefficient, we tested different choices in $[0.001,10]$ and pick the best choice $0.001$. We also tried spectral normalization for Lipschitz constraints, but we found that its performance is worse than gradient penalty on these datasets.} We also compare them to our proposed algorithm SFT-PG (B).  All methods are initialized with pretrained DDPM, $T=10$, then trained till convergence. As shown in Fig~\ref{fig: comparison}, we can observe that all methods converge and the training curves are almost comparable, while SFT-PG (B) enjoys a slightly better final performance.

% \subsubsection{compare different reward functions}
% \paragraph{WGAN}

% \subsection{Discussion}
% \subsubsection{Comparison with GAN}

% \subsection{DDIM}
% \subsection{Discussion}
% \subsection{Ablation studies}

% Note that it is not exactly WGAN-GP, we use it to indicate the gradient estimation is similar to WGAN.

% \subsection{Experiments on image datasets}

% \subsubsection{Experimental settings}

\begin{figure*}[ht]
\centering
\begin{subfigure}[b]{0.99\linewidth}
\begin{subfigure}[b]{0.24\linewidth}
    \centering
\includegraphics[width=\linewidth]{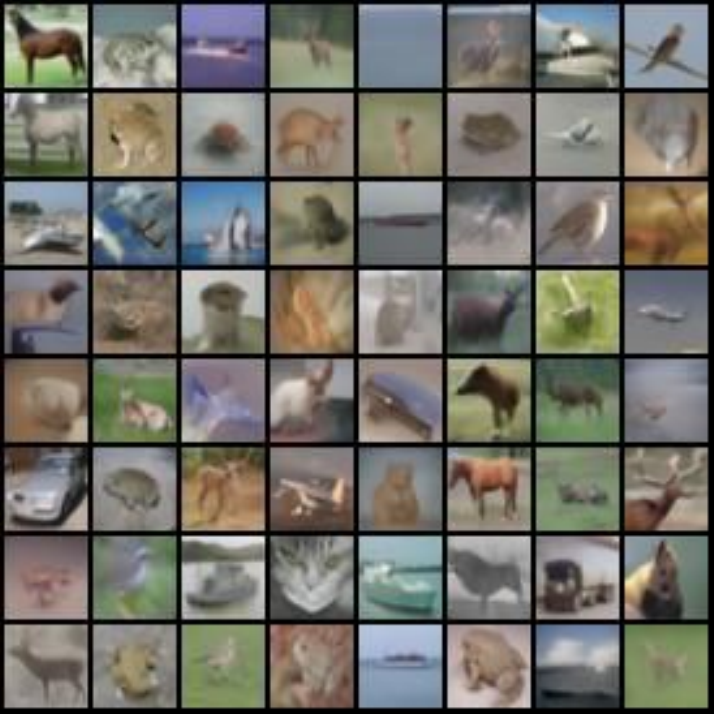}
\caption{CIFAR10, Initialization}
\end{subfigure}
\hfill
\begin{subfigure}[b]{0.24\linewidth}
    \centering
\includegraphics[width=\linewidth]{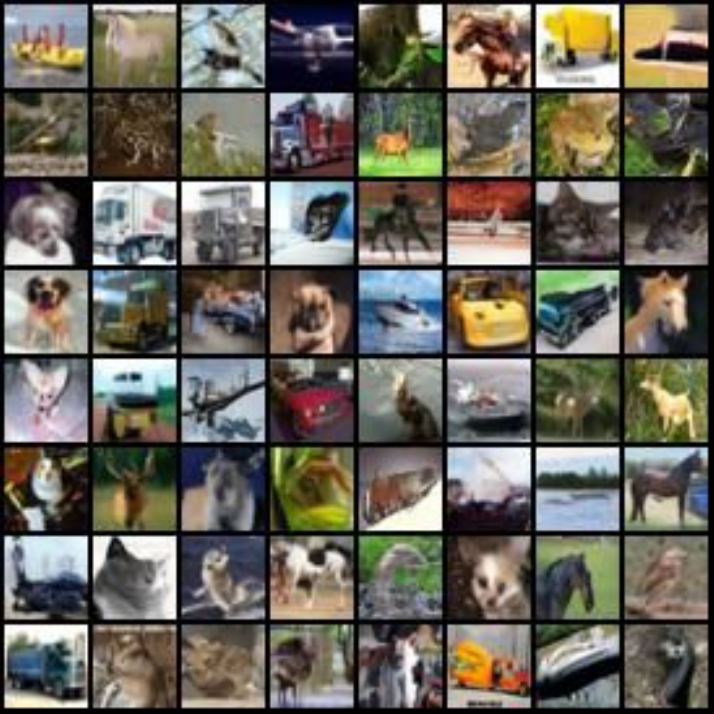}

\caption{CIFAR10, SFT-PG (B)}
\end{subfigure}
\hfill
\begin{subfigure}[b]{0.24\linewidth}
    \centering
\includegraphics[width=\linewidth]{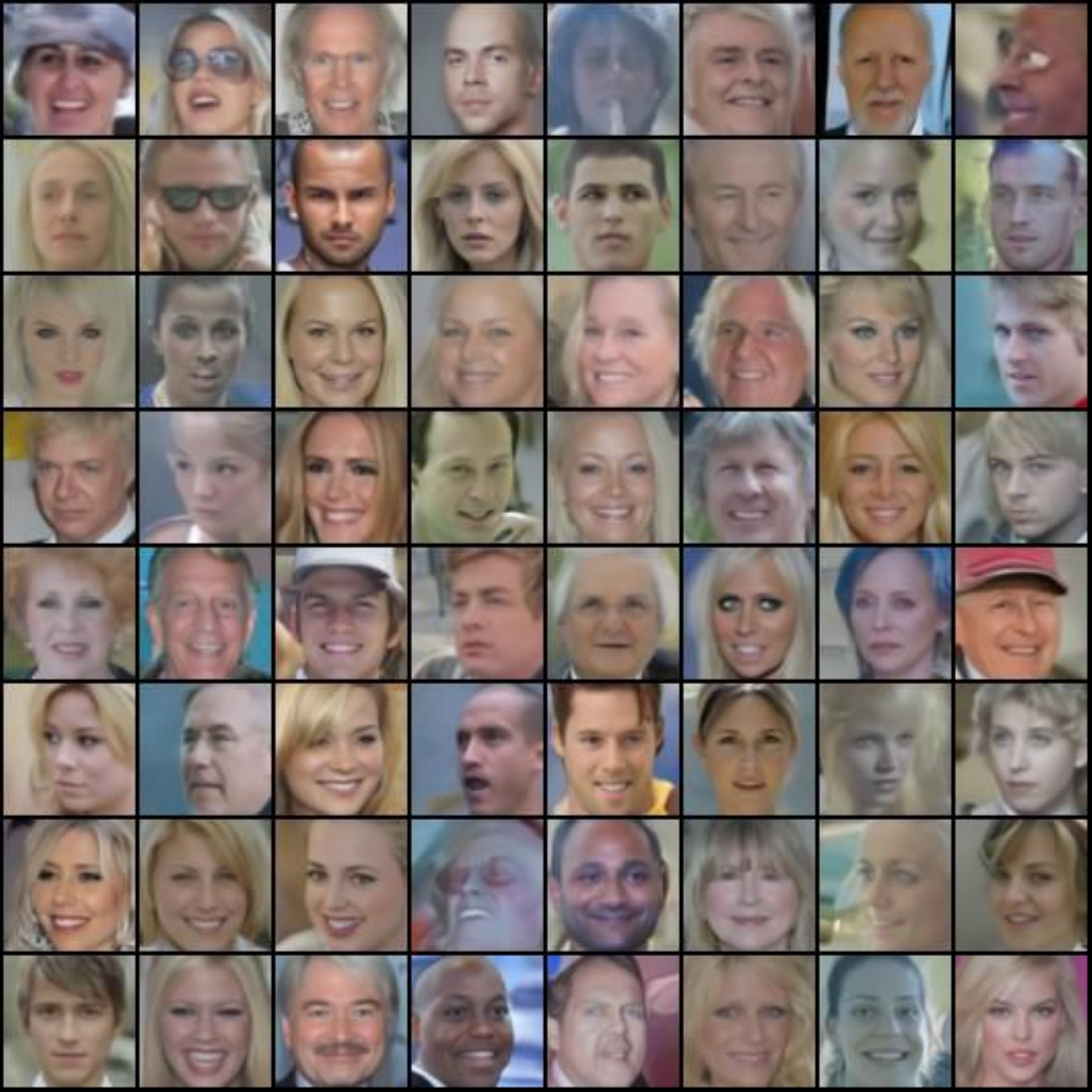}
\caption{CelebA, Initialization}
\end{subfigure}
\hfill
\begin{subfigure}[b]{0.24\linewidth}%
\centering
\includegraphics[width=\linewidth]{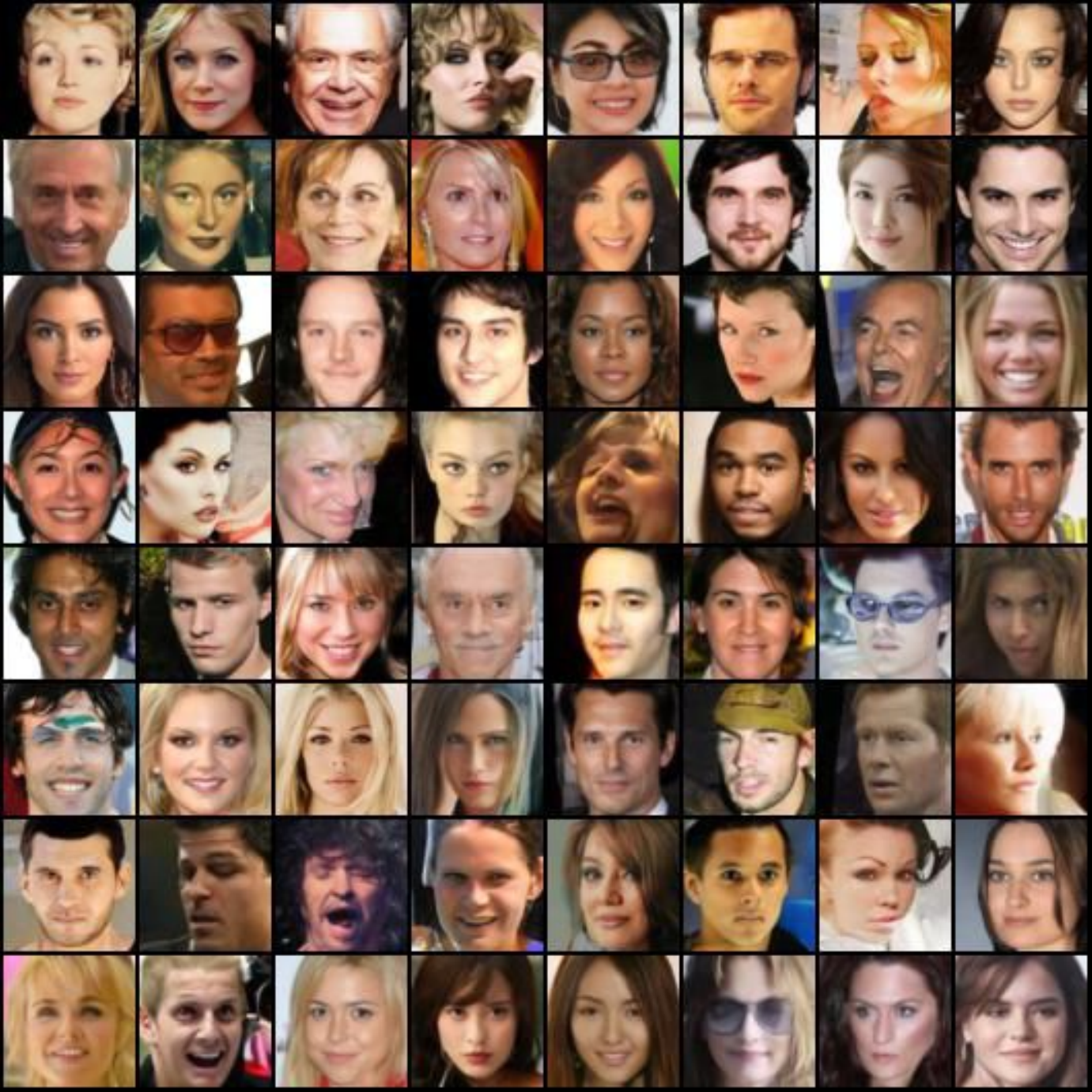}
\caption{CelebA, SFT-PG (B)}
\end{subfigure}
\end{subfigure}
     \caption{Randomly generated images before and after fine-tuning, on CIFAR10 $(32\times 32)$ and CelebA $(64\times 64)$, $T'=10$. The initialization is from pretrained models with $T=1000$ and sub-sampling schedules with $T'=10$ calculated from FastDPM~\citep{kong2021fast}.}
\label{fig: benchmark}
\end{figure*}
\subsubsection{Effect of Gradient Clipping with More Generator Steps}
\label{sec: clipping}

In Section~\ref{sec: mono}, we discussed that performing more generator steps with the same fixed critic and clipping the gradient norm can improve the training of our algorithm. Here we present the effect of $n_\text{generator} = 1$ or $5$ with different gradient clipping thresholds $\gamma$ on MNIST, initialized with a pretrained DDPM with $T=10$, FID=7.34. From Table~\ref{gradnorm}, we find that a small $\gamma$ with more steps can improve the final performance, but could hurt the performance if too small. Randomly generated samples from the model with the best FID are in Fig~\ref{mnist}.
We also conducted similar experiments on the toy datasets, but we find no significant difference on the final results, which is expected since the task is too simple.

% Experiments on MNIST. 
\begin{minipage}{\linewidth}
\begin{minipage}[b]{0.49\linewidth}
\centering

\scalebox{0.8}{
\begin{tabular}[b]{ll}
\toprule
\textbf{Method}  &\textbf{FID ($\downarrow$)}   \\
 \hline
1 step & 1.35\\
5 steps, $\gamma = 10$ & 0.83\\
5 steps, $\gamma = 1.0$ & \textbf{0.82} \\
5 steps, $\gamma = 0.1$ &0.89 \\
5 steps, $\gamma = 0.001$ &1.46 \\
 \bottomrule
\end{tabular}
}
\captionof{table}{Effect of $n_{\text{generator}}$ and $\gamma$.}
\label{gradnorm}
\end{minipage}
\hfill
\begin{minipage}[b]{0.49\linewidth}
\centering
\scalebox{1.12}{\includegraphics[width=0.49\linewidth]{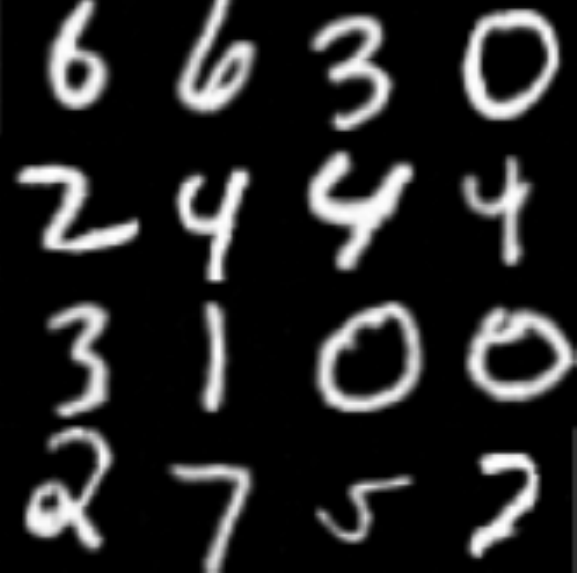}}
\captionof{figure}{Generated samples.}
\label{mnist}
% \caption{Generated MNIST.}
\end{minipage}
% \caption{The effect of more generator steps and gradient clipping thresholds.}
\end{minipage}

% \begin{minipage}{\linewidth}
% \begin{minipage}[b]{0.49\linewidth}
% \centering

% \scalebox{0.9}{
% \begin{tabular}[b]{ll}
% \toprule
% \textbf{Method}  &\textbf{FID ($\downarrow$)}   \\
%  \hline
% 1 step & 1.35\\
% 5 steps, $\gamma = 10$ & 0.83\\
% 5 steps, $\gamma = 1.0$ & \textbf{0.82} \\
% 5 steps, $\gamma = 0.1$ &0.89 \\
% 5 steps, $\gamma = 0.001$ &1.46 \\
%  \bottomrule
% \end{tabular}
% }
% \captionof{table}{Effect of $n_{\text{generator}}$ and $\gamma$, trained on MNIST.}
% \label{gradnorm}
% \end{minipage}
% \hfill
% \begin{minipage}[b]{0.49\linewidth}
% \centering
% \scalebox{0.55}{\includegraphics[width=0.49\linewidth]{mnist.pdf}}
% \captionof{figure}{Randomly generated samples, trained on MNIST.}
% \label{mnist}
% % \caption{Generated MNIST.}
% \end{minipage}
% % \caption{The effect of more generator steps and gradient clipping thresholds.}
% \end{minipage}

% mnist

% 1) effect of training more steps from the same critic (monotonic w dis)

% 2) effect of gradient clipping and small learning rate

\subsection{Benchmark Results}
\label{sec: benchmark}
To compare with existing fast samplers of DDPM, we take pretrained DDPMs with $T=1000$ and fine-tune them with sampling steps $T'=10$ on image benchmark datasets CIFAR-10 and CelebA.

Our baselines include various fast DDPM samplers with Gaussian noises:
% \footnote{Note that we do not compare to fast samplers based on DDIM as discussed in Section~\ref{sec: related}.}
naive DDPM sub-sampling, FastDPM~\citep{kong2021fast}, and recently advanced samplers like Analytic DPM~\citep{bao2021analytic} and SN-DPM~\citep{bao2022estimating}.  For fine-tuning, we use the fixed variance and sub-sampling schedules computed by FastDPM with $T'=10$ and only train the mean prediction model. From Table~\ref{tab: benchmark}, we can observe that the  performance of fine-tuning with $T'=10$ 
is comparable to the pretrained model with $T=1000$, outperforming the existing fast DDPM samplers. Randomly generated images before and after fine-tuning are in Fig~\ref{fig: benchmark}. 
% \vspace{-0.4cm}

We also present a comparison with DDIM sampling methods on CIFAR 10 benchmark in Appendix~\ref{app: ddim}, where our method is comparable to progressive distillation with $T' = 8$. 

\vspace{-0.3cm}
% \begin{table}[h]
% \centering
% \vskip 0.08in
% \scalebox{0.9}{
% \begin{tabular}{lc}
% \toprule
% \textbf{Method (T'=10)}  &\textbf{FID($\downarrow$)}   \\
%  \hline
% DDPM sub-sampling& 36.69\\
% FastDPM & 28.98 \\
% Analytic-DPM & 28.99\\
% SN-DDPM & 20.60\\
% % DPM-Solver & 5.83\\
% Fine-tuned & \textbf{3.34}\\
%  \bottomrule
% \end{tabular}
% }
% \caption{CelebA (64$\times$64) results. The original model (T=1000) has FID = 3.26.}
% \vspace{-0.3cm}
% \end{table}

% \paragraph{Remark}(The progressive distillation paper uses DDIM, the distillation can only apply to deterministic samplers)

% \subsubsection{T}

% \paragraph{Other things less related?}
% \vspace{-0.3cm}
\subsection{Discussions and Limitations}
In our experiments, we only train $\mu^\theta_{t}$ given a pretrained DDPM. 
It is also possible to learn the variance via fine-tuning with the same objective, and we leave it as future work. 
Although we do not need to track the gradients during all sampling steps, we still need to run $T'$ inference steps to collect the sequence, which is inevitably slower than GAN. 
% Learning the variance.

% Other regularization techniques. 
\begin{table}[h]
\centering
\vskip 0.08in
\scalebox{0.9}{
\begin{tabular}{lcc}
\toprule
\textbf{Method}  &\textbf{CIFAR-10} ($32\times32$) & \textbf{CelebA} ($64\times64$)  \\
 \hline
 DDPM& 34.76 &36.69 \\
 FastDPM & 29.43 & 28.98\\
Analytic-DPM & 22.94 & 28.99\\
SN-DDPM & 16.33& 20.60\\
% DPM-Solver & 6.37\\
SFT-PG (B) & \textbf{2.28} & \textbf{2.01}\\
 \bottomrule
\end{tabular}
}
\vspace{0.1in}
\caption{FID ($\downarrow$) on CIFAR-10 and CelebA, $T'=10$ for all methods. Our fine-tuning produces comparable results with the full-step pretrained models (FID = 3.03 for CIFAR-10, and FID = 3.26 for CelebA, $T=1000$).}
\label{tab: benchmark}
\vspace{-0.3cm}
\end{table}
% Model architecture.
% \vspace{-0.3cm}
\section{Conclusion}

In this work, we fine-tune DDPM samplers to minimize the IPMs via policy gradient. We show performing gradient descent of stochastic Markov chains w.r.t. IPM is equivalent to policy gradient, and present a surrogate function of the IPM which sheds light on monotonic improvement conditions. Our fine-tuning improves the existing fast samplers of DDPM, achieving comparable or even higher sample quality than the full-step model on various datasets.

\section*{Acknowledgements} 
Support for this research was provided by the University of Wisconsin-Madison Office of the Vice Chancellor for Research and Graduate Education with funding from the Wisconsin Alumni Research Foundation, and NSF Award DMS-2023239.

\bibliographystyle{unsrtnat}
\bibliography{references}  
\newpage
\appendix
\onecolumn
\section{Visualization: Effect of Shortcut Fine-Tuning}
\label{app: vis}

\begin{figure}[h]
\centering
\begin{subfigure}[b]{0.45\linewidth}
    \centering
\includegraphics[width=\linewidth]{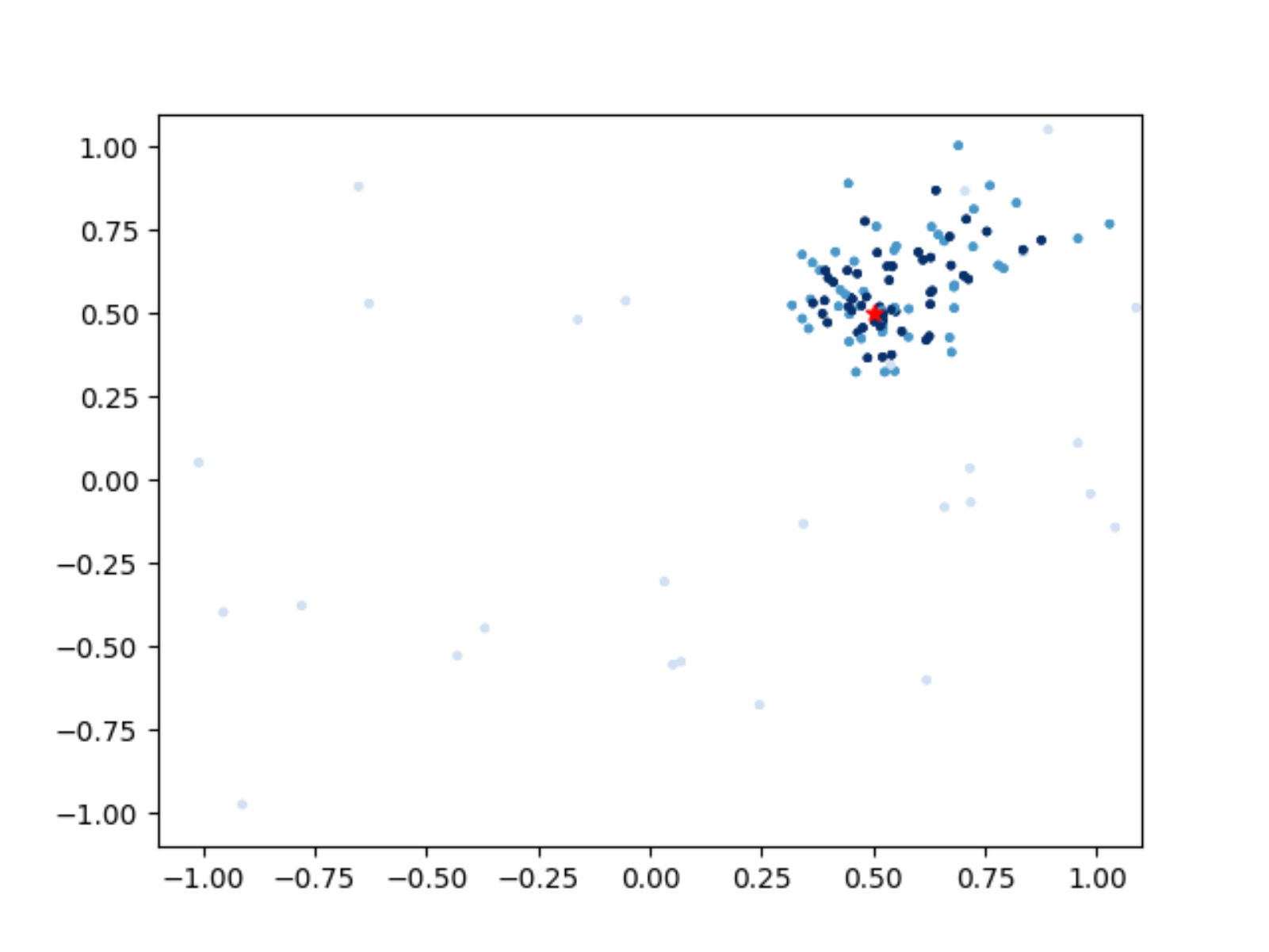}
\caption{Sampling steps before fine-tuning from DDPM}
\label{fig: vis}
\end{subfigure}
\hfill
\begin{subfigure}[b]{0.45\linewidth}
    \centering
\includegraphics[width=\linewidth]{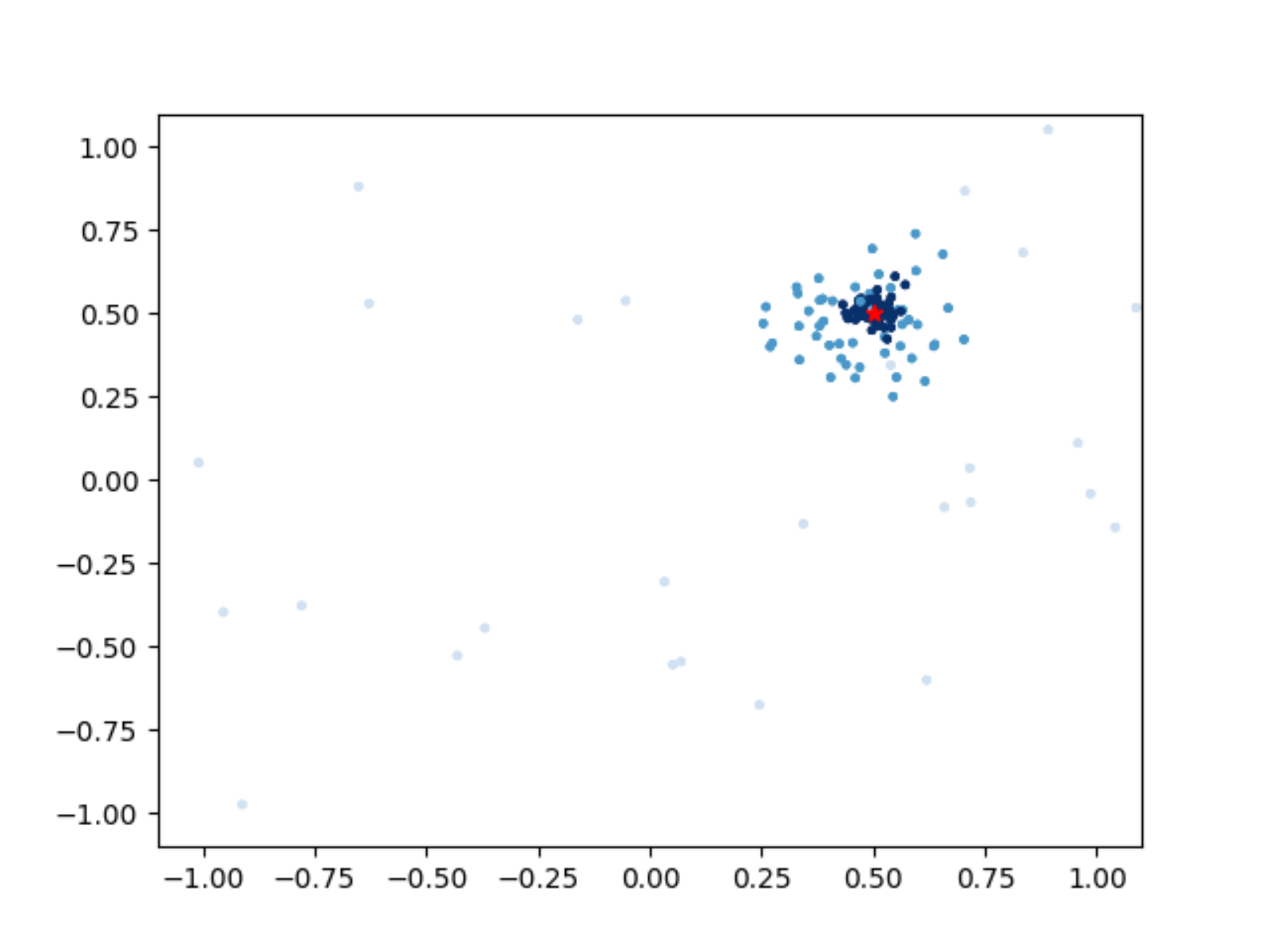}
\caption{Sampling steps after fine-tuning}
\label{fig: vis_new}
\end{subfigure}
\caption{Visualization of the sampling path before~(\ref{fig: vis}) and after short-cut fine-tuning~(\ref{fig: vis_new}).}
\label{fig: visualization}
\end{figure}

We provide visualizations of the complete sampling chain before and after fine-tuning in Fig~\ref{fig: visualization}. We generate 50 data points using the same random seed for DDPM and our fine-tuned model, trained on the same Gaussian cluster centered at the red spot $(0.5, 0.5)$ with a standard deviation of 0.01 in each dimension, $T=2$. The whole sampling path is visualized where different steps are marked with different intensities of the color: data points with the darkest color are finally generated. As shown in Fig~\ref{fig: visualization}, our fine-tuning does find a "shortcut" path to the final distribution.
\section{Illustration of Sub-sampling with $T'\ll T$ in DDPM}
\label{app: subsampling}

\begin{figure}[h!]
\centering
\centering
%{0.49\linewidth}
\includegraphics[width=0.5\linewidth]{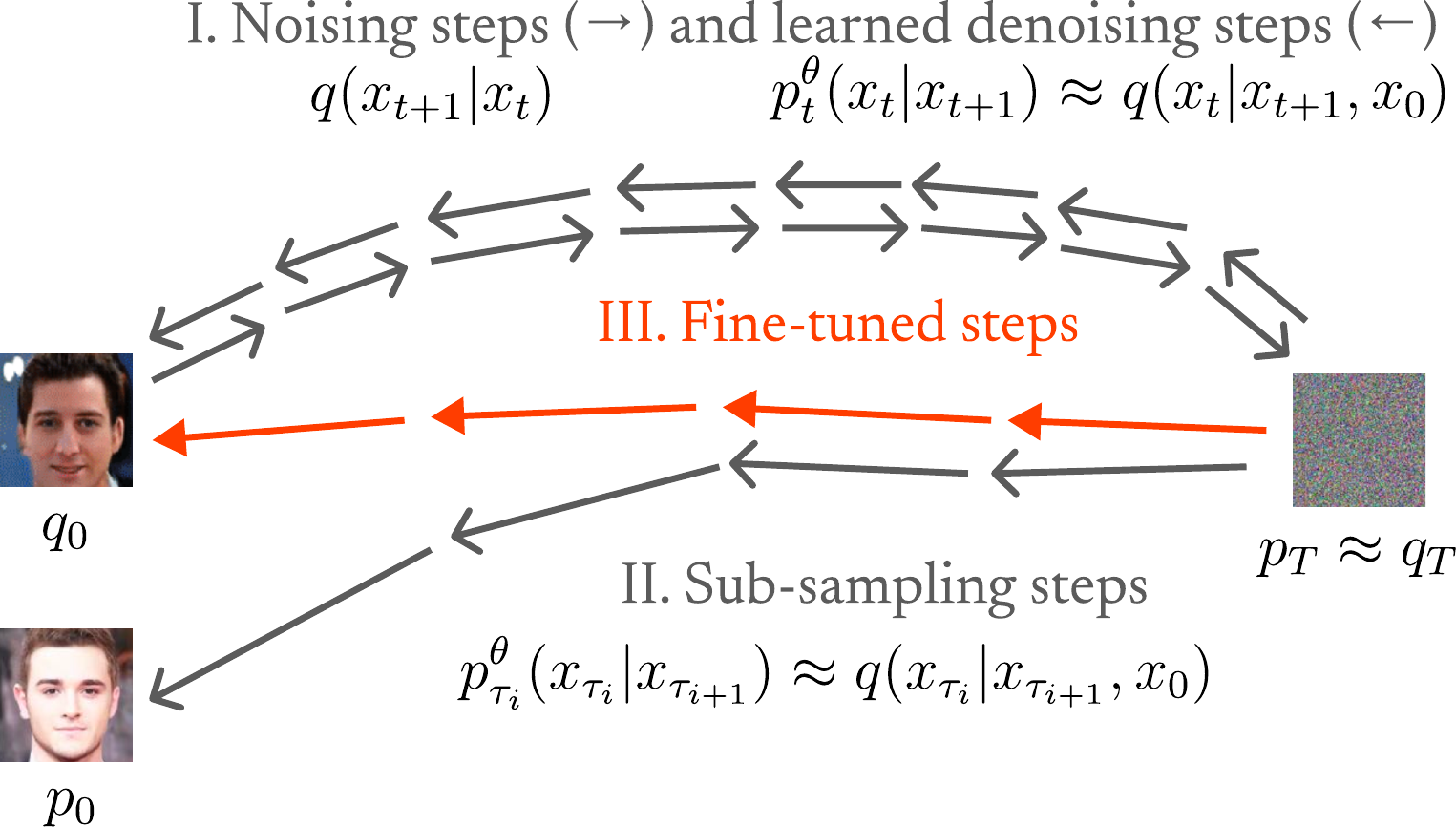}
\caption{
\textcolor{black}{When $T$ is large but we sub-sample with $T' \ll T$ cannot approximate the backward process accurately when each step is Gaussian as discussed in Case 2, Section~\ref{sec: motivation}. In this case, shortcut fine-tuning can also solve the issue by directly minimizing the IPM as an objective function.}}
\label{fig:fastsampling}
\end{figure}
% \vspace{-3cm}
\section{Towards Monotonic Improvement}

\label{app: mono}
Here we present detailed proof of Theorem~\ref{mono}.
For simplicity, we denote $p_0^\theta$ as $p_\theta$, $q_0$ as $q$, and $z\in \mathbb{R}^d$ to replace $x_0$ as a variable in our sample space. 

Recall the generated distribution: $p_{\theta}$.
Given target distribution $q$, the objective function is:
\begin{equation}
\min_{\theta}\max_{\alpha \in \mathcal{A}}g(p_{\theta}, f_\alpha, q),
\end{equation}
where $g(p_{\theta}, f_\alpha, q) = \int (p_{\theta}(z) - q(z))f_{\alpha}(z)dz$.

% Consider when $\mathcal{A}=\{\alpha: ||f_\alpha||_{\infty} \leq \frac{1}{2}\}$.

Recall $\Phi(p_{\theta},q) = \underset{\alpha \in \mathcal{A}}{\max}\int (p_{\theta}(z) - q(z))f_{\alpha}(z)dz  =\int (p_{\theta}(z) - q(z))f_{\alpha^*(p_\theta, q)}(z)dz$.

Assume that $g(p_{\theta}, f_\alpha, q)$ is Lipschitz w.r.t. $\theta$, given $q$ and $\alpha\in\mathcal{A}$. Our goal is to show that there exists $l\geq 0$ s.t.:
\begin{equation}
     \Phi(p_{\theta'}, q) \leq g(p_{\theta'},f_{\alpha^*(p_\theta,q)}, q) + 2l||\theta-\theta'||,
\end{equation}
where the equality is achieved when $\theta = \theta'$. 
 
If the above inequality holds, $L_{\theta}(\theta') = g(p_{\theta'},f_{\alpha^*(p_\theta,q)}, q) + 2l||\theta-\theta'||$ can be a surrogate function of $\Phi(p_{\theta'}, q)$:  $\Phi(p_{\theta'}, q) -\Phi(p_\theta, q) \leq L_{\theta}(\theta')- L_{\theta}(\theta)$, $L_{\theta}(\theta) = \Phi(p_\theta, q)$, which means $\theta'$ that can improve $L_{\theta}(\theta')$ is also guaranteed to get improvement on $\Phi(p_{\theta'}, q)$. 

\begin{proof}
Consider
\begin{equation}
\begin{split}
&\mathrel{\phantom{=}}\Phi(p_{\theta'}, q) - \Phi(p_{\theta}, q)\\
& =\int (p_{\theta'} (z)- q(z))f_{\alpha^*(p_{\theta'}, q)}(z)dz-\int (p_{\theta} (z)- q(z))f_{\alpha^*(p_\theta, q)}(z)dz\\
& =\int (p_{\theta'} (z)- q(z))f_{\alpha^*(p_{\theta'}, q)}(z)dz -\int (p_{\theta'} (z)- q(z))f_{\alpha^*(p_\theta, q)}(z)dz \\
& \mathrel{\phantom{=}}+\int (p_{\theta'} (z)- q(z))f_{\alpha^*(p_\theta, q)}(z)dz- \int (p_{\theta} (z)- q(z))f_{\alpha^*(p_\theta, q)}(z)dz\\
& = \int (p_{\theta'} (z)- q(z))(f_{\alpha^*(p_{\theta'}, q)}(z)-f_{\alpha^*(p_\theta, q)}(z))dz + \int (p_{\theta'} (z)- p_{\theta} (z))f_{\alpha^*(p_\theta, q)}(z)dz\\
& =  \int (q(z)-p_{\theta'} (z))(f_{\alpha^*(p_\theta, q)}(z)-f_{\alpha^*(p_{\theta'}, q)}(z))dz + \int (p_{\theta'} (z)- p_{\theta} (z))f_{\alpha^*(p_\theta, q)}(z)dz.\\
\end{split}
\end{equation}

We have

\begin{equation}
\begin{split}
&\mathrel{\phantom{=}}\int (q(z)-p_{\theta'} (z))(f_{\alpha^*(p_\theta, q)}(z)-f_{\alpha^*(p_{\theta'}, q)}(z))dz\\
&=\int (p_{\theta}(z)-p_{\theta'} (z))(f_{\alpha^*(p_\theta, q)}(z)-f_{\alpha^*(p_{\theta'}, q)}(z))dz-\int (p_{\theta}(z)-q(z))(f_{\alpha^*(p_\theta, q)}(z)-f_{\alpha^*(p_{\theta'}, q)}(z))dz\\
&\leq \int (p_{\theta}(z)-p_{\theta'} (z))(f_{\alpha^*(p_\theta, q)}(z)-f_{\alpha^*(p_{\theta'}, q)}(z))dz,
\end{split}
\end{equation}
where the last inequality comes from the definition: $\alpha^*(p_\theta, q) = \underset{\alpha \in \mathcal{A}}{\argmax} \int (p_{\theta}(z)-q(z))f_{\alpha}(z)$.

So  
\begin{equation}
\begin{split}
&\mathrel{\phantom{=}}\Phi(p_{\theta'},q) - \Phi(p_{\theta},q) \\
&= \int (p_{\theta'} (z)- p_{\theta} (z))f_{\alpha^*(p_\theta, q)}(z)dz + \int (q(z)-p_{\theta'} (z))(f_{\alpha^*(p_\theta, q)}(z)-f_{\alpha^*(p_{\theta'}, q)}(z))dz \\
&\leq g(p_{\theta'}, f_{\alpha^*(p_\theta, q)}, q) - g(p_{\theta}, f_{\alpha^*(p_\theta, q)},q) + \int (p_{\theta}(z)-p_{\theta'}(z))(f_{\alpha^*(p_\theta, q)}(z)-f_{\alpha^*(p_{\theta'}, q)}(z))dz\\
% & \leq g(p_{\theta'}, f_{\alpha^*(p_\theta, q)}, q) - g(p_{\theta}, f_{\alpha^*(p_\theta, q)},q) + 2\int (p_{\theta}(z)-p_{\theta'}(z))f_{\alpha^*(p_\theta, p_{\theta'})}(z)dz\\
&\leq g(p_{\theta'}, f_{\alpha^*(p_\theta, q)},q) - g(p_{\theta}, f_{\alpha^*(p_\theta, q)},q) + 2l||\theta-\theta'||,\\
\end{split}
\end{equation}

where the last inequality comes from the Lipschitz assumption of $g(p_{\theta}, f_{\alpha(p_\theta, q)},q)$ given $\alpha^*(p_\theta, q)$ and $\alpha^*(p_{\theta'}, q)$. Recall that $\Phi(p_{\theta},q) = g(p_{\theta}, f_{\alpha^*(p_\theta, q)},q)$, the proof is then complete.
\end{proof}

% \begin{lemma}
% Assume that $g(p_{\theta}, \alpha, q)$ is Lipschitz w.r.t. $\theta$ given any $q$ and $\alpha$. Then there exists $l \geq 0$ s.t. $\Phi(p_{\theta'}, p_{\theta}) \leq l||\theta'-\theta||$.
% \end{lemma}
% \begin{proof}
% We have $g(p_{\theta}, \alpha, p_{\theta}) = 0$, $\forall \alpha$. From Lipschitzness, there exists $l\geq 0$ s.t. $|g(p_{\theta'}, \alpha, p_{\theta})-g(p_{\theta}, \alpha, p_{\theta})\leq l||\theta' - \theta||$, $\forall \alpha$. Taking $\alpha$ that maximizes $g(p_{\theta'}, \alpha, p_{\theta})$ gives the result.
% \end{proof}
% (The above result can also be obtained if we assume $\mu_{\theta}(x_t,t)$ is Lipschitz in $\theta$ and $\mu_{\theta}(x_t,t)$ is the conditional mean of the $t$-th step in diffusion models, and using KL to upper bound TV.)

% The above lemma gives another surrogate function that is easier to estimate: $L_{\theta}(\theta') = g(\theta',\alpha^*(p_0^\theta,q_0), q)  + 2l||\theta' - \theta||$.

Consider the optimization objective: $\text{minimize}_{\theta'} L_{\theta}(\theta')$. Using the Lagrange multiplier, we can convert the problem to a constrained optimization problem: 
    
\begin{equation}
\begin{split}
&\underset{\theta'}{\text{minimize}} \quad g(p_{\theta'},f_{\alpha^*(p_\theta, q)}, q) \\
& \text{s.t.} \quad||\theta' - \theta||\leq \delta \\
\end{split}
\end{equation}
where $\delta >0$. The constraint is a convex set and the projection to the set is easy to compute via norm regularization, as we discussed in Section~\ref{sec: mono}. Intuitively, it means that as long as we only optimize in the neighborhood of the current generator $\theta'$, we can treat $g(p_{\theta'},f_{\alpha^*(p_\theta, q)}, q)$ as an  approximation of $\Phi(p_{\theta'}, q)$ during gradient updates.

\section{Baseline Function for Variance Reduction}
\label{app: baseline}
Here we present the derivation of Eq~(\ref{eq: baseline}), which is very similar to \citet{schulman2015high}.

To show

\begin{equation}
\underset{p_{x_{0:T}}^\theta}{\mathbb{E}}\left[f_{\alpha}(x_0) \sum_{t=0}^{T-1} \nabla_{\theta}\log p_t^{\theta}(x_t|x_{t+1}) \right]=\underset{p_{x_{0:T}}^\theta}{\mathbb{E}}\left[\sum_{t=0}^{T-1}(f_{\alpha}(x_0) - V_{t+1}^\omega(x_{t+1}))\nabla_{\theta}\log p_t^{\theta}(x_t|x_{t+1})  \right],
\end{equation}
we only need to show 

\begin{equation}
    \underset{p_{x_{0:T}}^\theta}{\mathbb{E}}\left[V_{t+1}^\omega(x_{t+1})\nabla_{\theta}\log p_t^{\theta}(x_t|x_{t+1})  \right] =0.
\end{equation}
Note that
\begin{equation}
\begin{aligned}
&\mathrel{\phantom{=}}\underset{p_{x_{0:T}}^\theta}{\mathbb{E}}\left[V_{t+1}^\omega(x_{t+1})\nabla_{\theta}\log p_t^{\theta}(x_t|x_{t+1})  \right] \\
&=\underset{p_{x_{t+1:T}}^\theta}{\mathbb{E}}\left[\underset{p_{x_{0:t}}^\theta}{\mathbb{E}}\left[V_{t+1}^\omega(x_{t+1})\nabla_{\theta}\log p_t^{\theta}(x_t|x_{t+1}) |x_{t+1:T}\right] \right]\\
&=\underset{p_{x_{t+1:T}}^\theta}{\mathbb{E}}\left[\underset{p_{x_{t}}^\theta}{\mathbb{E}}\left[V_{t+1}^\omega(x_{t+1})\nabla_{\theta}\log p_t^{\theta}(x_t|x_{t+1})|x_{t+1:T}\right] \right],\\
\end{aligned}
\end{equation}

where $\underset{p_{x_{t}}^\theta}{\mathbb{E}}\left[V_{t+1}^\omega(x_{t+1})\nabla_{\theta}\log p_t^{\theta}(x_t|x_{t+1}) |x_{t+1:T}\right]=0$ when $p_t^{\theta}(x_t|x_{t+1})$ and $\nabla_{\theta}p_t^{\theta}(x_t|x_{t+1})$ are continuous:  

\begin{equation}
\begin{aligned}
&\mathrel{\phantom{=}}\underset{p_{x_{t}}^\theta}{\mathbb{E}}\left[V_{t+1}^\omega(x_{t+1})\nabla_{\theta}\log p_t^{\theta}(x_t|x_{t+1}) |x_{t+1:T}\right]\\  
&=V_{t+1}^\omega(x_{t+1}) \int p_{x_t}^\theta(x_t)\nabla_{\theta}\log p_t^{\theta}(x_t|x_{t+1})dx_t\\
&=V_{t+1}^\omega(x_{t+1})  \int p_{x_t}^\theta(x_t)\nabla_{\theta}\log p_t^{\theta}(x_t|x_{t+1})dx_t\\
&=V_{t+1}^\omega(x_{t+1})  \int\nabla_{\theta}p_t^{\theta}(x_t|x_{t+1})dx_t\\
&=V_{t+1}^\omega(x_{t+1}) \nabla_{\theta} \int p_t^{\theta}(x_t|x_{t+1})dx_t\\
&=0.
\end{aligned}
\end{equation}

\section{Comparison with DDIM Sampling}
\label{app: ddim}
We present a comparison with DDIM sampling methods on CIFAR 10 benchmark as below. Methods marked with * require additional model training, and NFE is the number of sampling steps (number of score function evaluations). All methods are based on the same pretrained DDPM model with $T = 1000$.
\begin{table}[H]
\centering

\begin{tabular}{lllllll}
\toprule
Method (DDPM, stochastic)  & NFE& FID && Method (DDIM, deterministic) & NFE & FID\\
 \hline
DDPM          & 10  & 34.76 &  & DDIM                         & 10  & 17.33 \\ 
SN-DDPM       & 10  & 16.33 &  & DPM-solver                   & 10  & 4.70  \\ 
SFT-PG* & 10  & 2.28  &  &                               &     &       \\ 
SFT-PG* & 8   & 2.64  &  & Progressive distillation* & 8   & 2.57  \\ 
 \bottomrule
\end{tabular}

% \begin{tabular}{|l|l|l|l|l|l|l|}
% \hline
% Method(DDPM)  & NFE & FID   &  & Method (DDIM)                 & NFE & FID   \\ \hline
% DDPM          & 10  & 34.76 &  & DDIM*                         & 10  & 17.33 \\ \hline
% SN-DDPM       & 10  & 16.33 &  & DPM-solver*                   & 10  & 4.70  \\ \hline
% SFT-PG*$^{+}$ & 10  & 2.28  &  &                               &     &       \\ \hline
% SFT-PG*$^{+}$ & 8   & 2.64  &  & Progressive distillation*$^+$ & 8   & 2.57  \\ \hline
% \end{tabular}
\caption{Comparison with DDIM sampling methods which is deterministic given the initial noise.}
\end{table}
We can observe that SFT-PG with NFE=10 produces the best FID, and SFT-PG with NFE=8 is comparable to progressive distillation with the same NFE. Our method is orthogonal to other fast sampling methods like distillation. We also note that our fine-tuning is more computationally efficient than progressive distillation: For example, for CIFAR10, progressive distillation takes about a day using 8 TPUv4 chips, while our method takes about 6h using 4 RTX 2080Ti, and the original DDPM training takes 10.6h using TPU v3.8. Besides, since we use a fixed small learning rate during training (1e-6), it is also possible to further accelerate our training by choosing appropriate learning rate schedules. 
% (Appendix ends)
\section{Experimental Details}
\label{app: exp}

Here we provide more details for our fine-tuning settings for reproducibility.
\subsection{Experiments on Toy Datasets}
\paragraph{Training sets.} For 2D toy datasets, each training set contains 10K samples.  

\paragraph{Model architecture.}The generator we adopt is a 4-layer MLP with 128 hidden units and soft-plus activations. The critic and the baseline function we use are 3-layer MLPs with 128 hidden units and ReLU activations. 

\paragraph{Training details.}For optimizers, we use Adam~\citep{kingma2014adam} with $\text{lr} = 5\times 10^{-5}$ for the generator, and $\text{lr} = 1\times 10^{-3}$ for both the critic and baseline functions. Pretraining for DDPM is conducted for 2000 epochs for $T=10,100,1000$ respectively. Both pretraining and fine-tuning use batch size 64 and we train 300 epochs for fine-tuning.

\subsection{Experiments on Image Datasets} 
\label{app: }

\paragraph{Training sets.}
We use 60K training samples from MNIST, 50K training samples from CIFAR-10, and 162K samples from CelebA.

\paragraph{Model architecture.}
For model architecture, we use U-Net as the generative model as \citet{ho2020denoising}. For the critic, we adopt 3 convolutional layers with kernel size = 4, stride = 2, padding = 1 for downsampling,  followed by 1 final convolutional layer with kernel size = 4, stride = 1, padding = 0, and then take the average of the final output. The numbers of output channels are 256,512,1024,1 for each layer, with Leaky ReLU (slope = 0.2) as activation. For the baseline function, we use a 4-layer MLP with timestep embeddings. The numbers of hidden units are 1024, 1024, 256, and the output dimension is 1.
% pretrained models

\paragraph{Training details.}
For MNIST, we train a DDPM with  $T=10$ steps for 100 epochs to convergence as a pretrained model. For CIFAR-10 and CelebA, we use the pretrained model in \citet{ho2020denoising} and \citet{song2020denoising} respectively with $T=1000$, and use the sampling schedules calculated by FastDPM~\citep{kong2021fast} with VAR approximation and DDPM sampling schedule as initialization for our fine-tuning. We found that rescaling the pixel values to [0,1] is a default choice in FastDPM, but it hurts the training if we put the rescaled images directly into the critic, so we remove the rescaling part during our fine-tuning. For optimizers, we use Adam with $\text{lr} = 1\times 10^{-6}$ for the generator, and $\text{lr} = 1\times 10^{-4}$ for both the critic and baseline functions. We found that smaller learning rates help the stability of training, which is compliant with the theoretical result in Section~\ref{mono}. For MNIST and CIFAR-10, we train 100 epochs with batch size = 128. For CelebA we trained 100 epochs with batch size = 64. 

\paragraph{More generated samples.} We present generated samples from the initialized FastDPM and our fine-tuned model respectively using the same random seed to show the effect of our fine-tuning in Fig~\ref{fig: more_cifar} and Fig~\ref{fig: more_celeba}. We notice that some of the images generated by our fine-tuned model are similar to images at initialization but with much richer colors and more details, and there are also some cases that the images after fine-tuning look very different than that from initialization.

% \newpage
\begin{figure}[h!]
\centering
%{0.49\linewidth}
\includegraphics[width=0.30\linewidth]{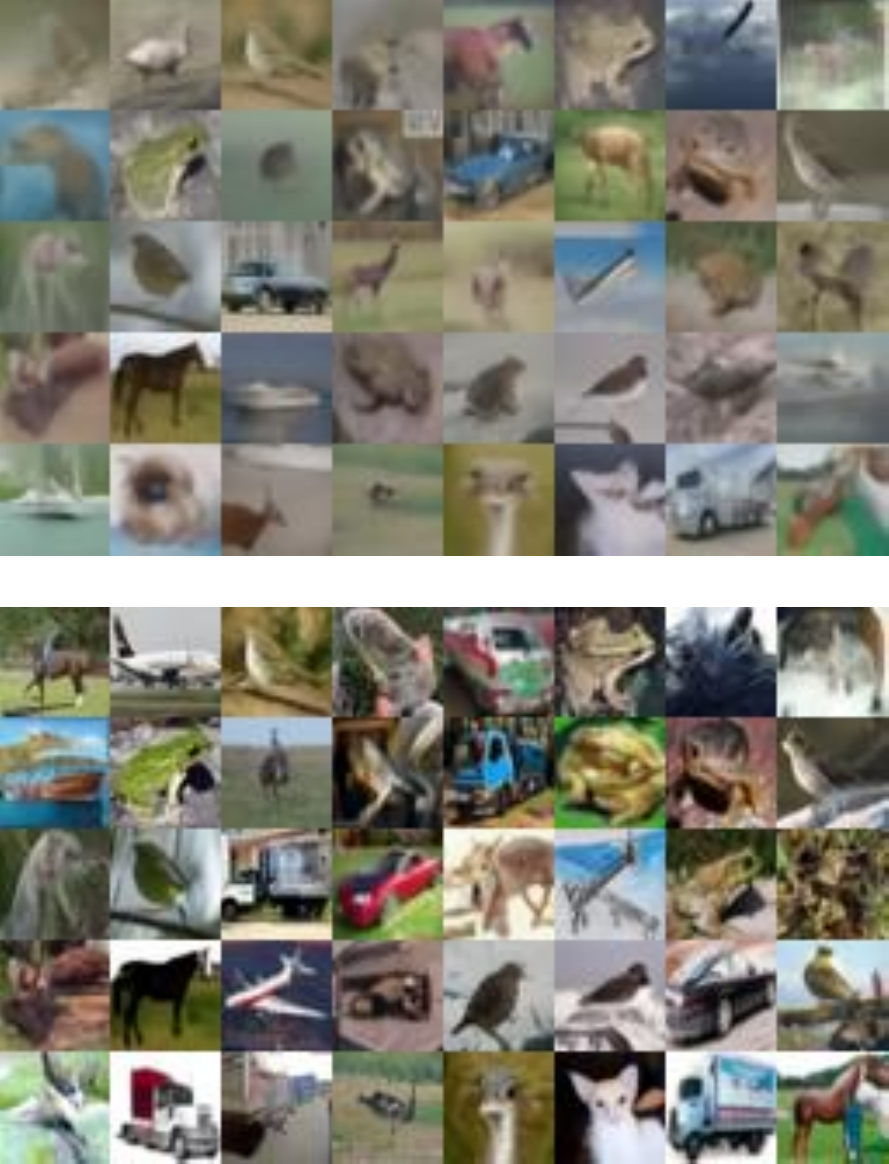}
\caption{
Images generated from FastDPM as initialization (on the top) and from the fine-tuned model (on the bottom), generated using the same seed, trained on CIFAR-10.}
\label{fig: more_cifar}
\end{figure}
% \vspace{-10inch}

\begin{figure}[h!]
\centering
\includegraphics[width=0.45\linewidth]{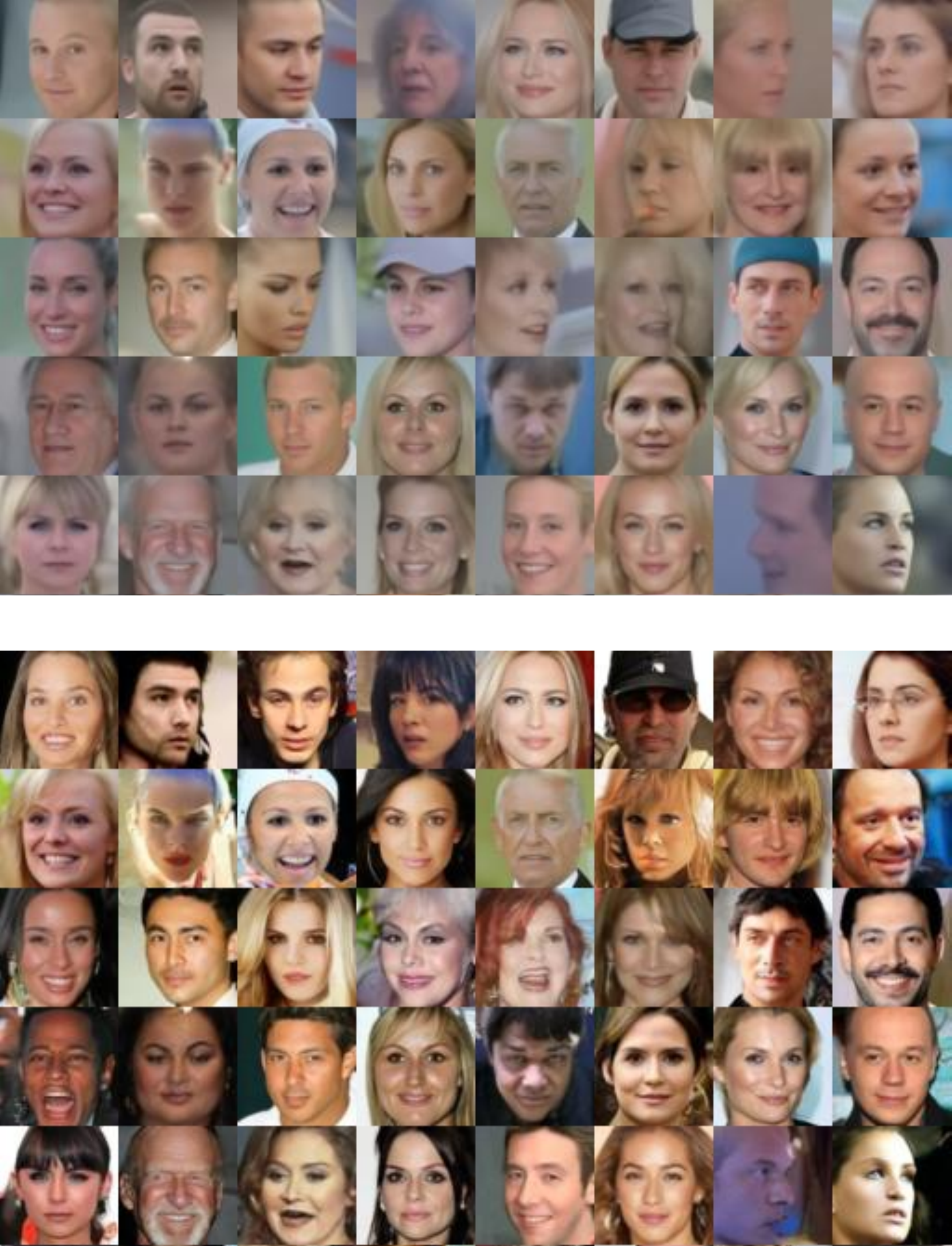}
\caption{Images generated from FastDPM as initialization (on the top) and from the fine-tuned model (on the bottom), generated using the same seed, trained on CelebA.}
\label{fig: more_celeba}
\end{figure}
% value function 

% universal does not depend on pretrained critics and can be trained from scratch. the fixed critic does not work when training from scratch?

% In a Wasserstein GAN (WGAN), the discriminator is trained to estimate the Wasserstein distance between the true data distribution and the generated data distribution. This distance is calculated as the maximum value of the difference between the discriminator's output on real data and its output on generated data. Using a fixed discriminator would prevent the generator from improving and the model would not converge. Thus, in a WGAN, the generator and discriminator are both updated during training to minimize this distance and improve the quality of the generated samples.
% No, it is not possible to have a fixed perfect discriminator for a Wasserstein GAN (WGAN) because the goal of the generator in a WGAN is to "fool" the discriminator by producing samples that are indistinguishable from real data. This means that the discriminator must always be able to improve to better distinguish between real and generated samples, and therefore it cannot be fixed. stuck in suboptimal?

\end{document}